
\documentclass[letterpaper, 10 pt, conference]{ieeeconf}  

\IEEEoverridecommandlockouts                              

\overrideIEEEmargins                                      



\usepackage{amsmath}

\usepackage{amsthm}
\usepackage[font=footnotesize,labelfont=normalfont]{caption}
\usepackage{subcaption}
\usepackage{xcolor}
\usepackage{hyperref}
\usepackage{algpseudocode}
\usepackage[linesnumbered,ruled,vlined]{algorithm2e}
\usepackage{amsfonts}
\usepackage{svg}
\usepackage{cite}

\DeclareMathOperator*{\argmin}{arg\,min}
\newtheorem{theorem}{Theorem}
\usepackage{changepage}
\usepackage{siunitx}
\usepackage{tikz}
\usetikzlibrary{fit,calc}
\newcommand*{\tikzmk}[1]{\tikz[remember picture,overlay,] \node (#1) {};\ignorespaces}
\newcommand{\boxit}[1]{\tikz[remember picture,overlay]{\node[yshift=3pt,fill=#1,opacity=.25,fit={(A)($(B)+(.80\linewidth,.8\baselineskip)$)}] {};}\ignorespaces}
\colorlet{mypink}{red!40}
\colorlet{myyellow}{yellow!70}
\colorlet{myblue}{cyan!60}

\title{\LARGE \bf
Hierarchical Large Scale Multirobot Path (Re)Planning
}

\author{Lishuo Pan, Kevin Hsu, and Nora Ayanian
\thanks{This work was supported by NSF grants 1837779, 2317145, 2311967, 2330942. All authors are with the Department of Computer Science, Brown University, Providence, RI 02912, USA.
        Email: {\tt\footnotesize \{lishuo\_pan, kevin\_hsu, nora\_ayanian\}@brown.edu}}%
}

\begin{document}

\maketitle
\thispagestyle{empty}
\pagestyle{empty}

\begin{abstract}
We consider a large-scale multi-robot path planning problem in a cluttered environment. Our approach achieves real-time replanning by dividing the workspace into cells and utilizing a hierarchical planner. Specifically, we propose novel multi-commodity flow-based high-level planners that route robots through cells with reduced congestion, along with an anytime low-level planner that computes collision-free paths for robots within each cell in parallel. A highlight of our method is a significant improvement in computation time. Specifically, we show empirical results of a 500-times speedup in computation time compared to the baseline multi-agent pathfinding approach on the environments we study. We account for the robot's embodiment and support non-stop execution with continuous replanning. We demonstrate the real-time performance of our algorithm with up to 142 robots in simulation, and a representative 32 physical Crazyflie nano-quadrotor experiment.
\end{abstract}

\section{Introduction}
Large fleets of robots, 
such as those used in warehouse operations~\cite{wawrla2019applications}, disaster response~\cite{van2020drones}, and delivery~\cite{Scott2017DroneDM}, demand 
coordination solutions that adjust in real time to changing goals.  
In this work, we present a real-time lifelong hierarchical method for navigating a large team of robots to 
independent goals in a large, cluttered environment that 
guarantees collision avoidance. 
By lifelong, we mean robots can enter and exit the space, and can receive another goal at any time, as they would in a warehouse or delivery problem.
Our approach 
partitions the space into disjoint cells, allowing planning algorithms to run concurrently in parallel within each cell. 
A novel high-level planner routes robots through the partition, while a low-level anytime multi-agent pathfinding (MAPF) algorithm navigates robots to 
local goals within each cell in parallel. 
The real-time property holds as long as there are not too many cells or robots in a workspace; the limits for real-time operation are empirical and problem-specific, however, we demonstrate real-time performance for 142 robots in simulation with a 25-cell partition.

\begin{figure}[t]
    \centering
    \includegraphics[width=0.48\textwidth]{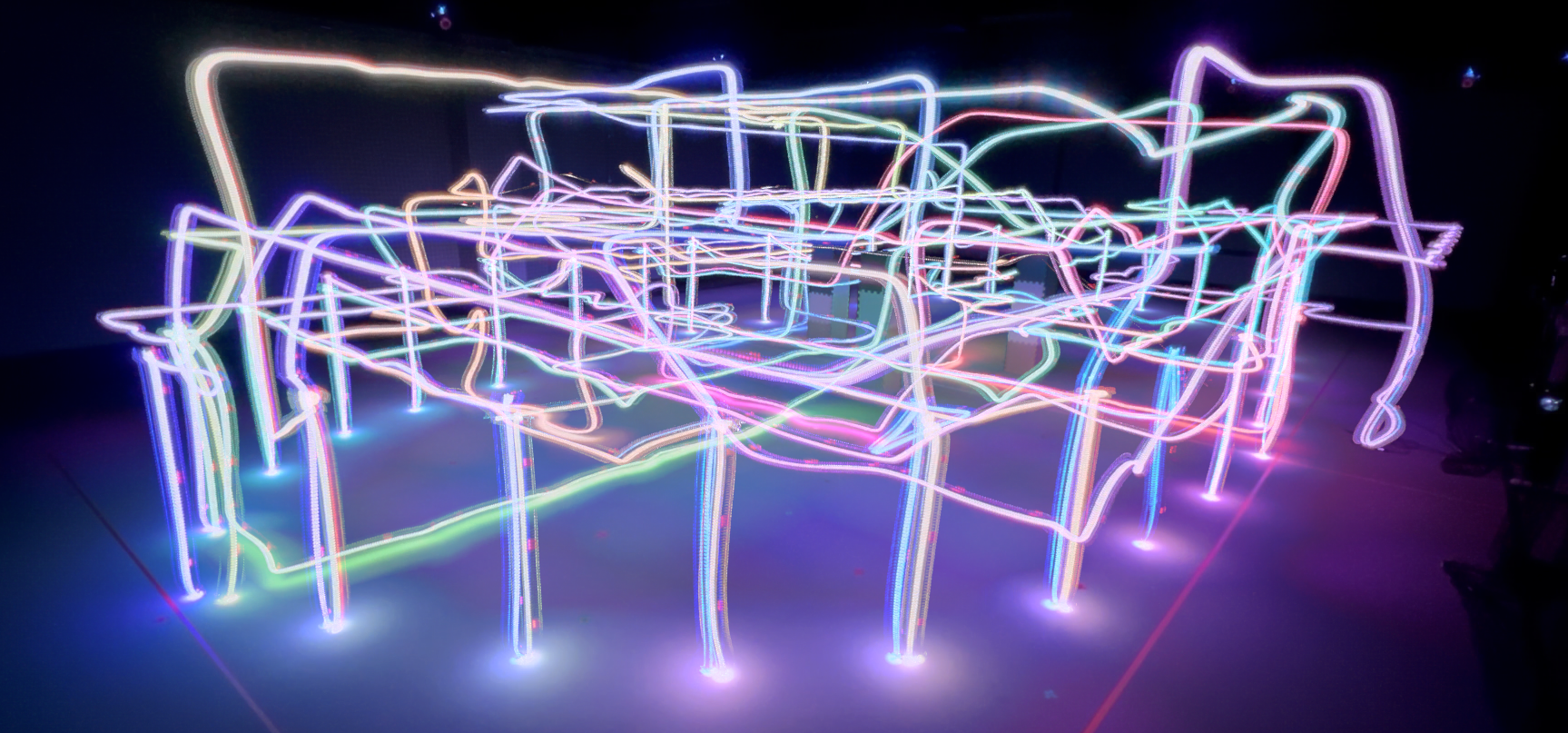}
    \caption{Long exposure of 32 quadrotors navigating a cluttered environment.}
    \label{fig:demo}
    \vspace{-2em}
\end{figure}

We are primarily interested in unmanned aerial vehicles (UAVs) operating in 3D space, such as in city-scale on-demand UAV package delivery; however, our approach applies to robots operating in 2D as well.
We present two approaches for high-level planning depending on the problem's requirements: 1) an egocentric greedy approach that always operates in real-time and 2) a novel high-level planner that routes robots through the partition using multi-commodity flow (MCF)~\cite{10.5555/137406}. 
There are tradeoffs between these two approaches.
The egocentric greedy planner operates in real-time regardless of the number of cells; however, it has no mechanism for distributing robots, thus it can 
result in congestion and longer low-level planning times within some cells. 
On the other hand, the MCF-based approach eases cell congestion by regulating the flow of robots into each cell while ensuring bounded-suboptimal inter-cell routing; thus, it can be useful in environments
such as urban UAV package delivery, where different types of cells (e.g., residential vs.\ highway) may have different limits on the influx of robots. The MCF-based planners can operate in real-time under certain conditions, thus allowing for lifelong replanning while reducing congestion, which  leads to faster,  real-time low-level planning 
within each cell. 


The low-level planner prohibits collisions while respecting the robots' geometric shapes. A novel cell-crossing protocol allows robots to transition between cells without stopping in midair. 
Combined with the MCF-based planner, this allows real-time computation and safe, non-stop execution of multi-robot plans.
The contributions of this work are:
\vspace{-2pt}
\begin{itemize}
    \item  a hierarchical framework for large-scale multi-robot real-time coordination that significantly reduces computation time compared to the baseline MAPF solver, while resulting in a moderately suboptimal solution; and
    \item  novel multi-commodity flow-based high-level planners, MCF/OD and one-shot MCF, that reduce congestion by regulating the influx of robots to each cell.%
    \vspace{-2pt}%
\end{itemize}
We demonstrate the algorithm in simulation with up to 142 robots and in physical robot experiments with 32 nano-quadrotors in cluttered environments, shown in Fig.~\ref{fig:demo}.

\section{Related Work}
Centralized approaches to multi-robot planning \cite{honig2018trajectory, yu2018effective} face substantial computational challenges due to their theoretical hardness~\cite{yu2013structure}, 
 prohibiting real-time replanning for many-robot systems. 
Decentralized online approaches, on the other hand, can result in deadlocks, livelocks, congestion, collision, and reduced efficiency. 
RLSS~\cite{csenbacslar2023rlss} generates trajectories based on a single-robot planner without a deadlock-free guarantee. Furthermore, control barrier functions~\cite{wang2017safety}, or reactive control synthesis methods, are prone to deadlocks. Distributed MPC~\cite{luis2020online} results in deadlocks when narrow corridors are present. 
In cluttered environments, a buffered Voronoi cell-based algorithm~\cite{zhou2017fast} also suffers from potential deadlocks, and an algorithm using relative safe flight corridor~\cite{park2020online} leads to collisions.
In the present work, we aim to address these problems and facilitate real-time MAPF at the discrete planning phase. 


Search-based MAPF solvers generate deadlock- and collision-free paths, but the complexity of optimal solutions scale exponentially with the number of agents~\cite{yu2013structure}. 
Bounded-suboptimal algorithms have been proposed to overcome this complexity, but poor scalability still prevents their application to real-time coordination for large teams. To address this, partition-based MAPF~\cite{zhang2021hierarchical, leet2022shard} divides the workspace into cells, reducing the complexity in each cell. However, inter-cell routing is either single-agent based~\cite{zhang2021hierarchical} or the solution of a multi-commodity flow problem constrained on single-robot shortest paths~\cite{leet2022shard} without congestion-awareness, leading to hard instances within regions. 
Instead, we propose a novel congestion-aware inter-cell routing algorithm to distribute robots while maintaining bounded-suboptimality. Furthermore, the cell-crossing method in~\cite{leet2022shard} is unsuitable for aerial vehicles due to energy expenditure while stationed in hover for the cell-crossing channel to clear. Our novel cell-crossing protocol addresses this drawback. 
\begin{figure}[t]
    \centering
    \includegraphics[width=0.45\textwidth]{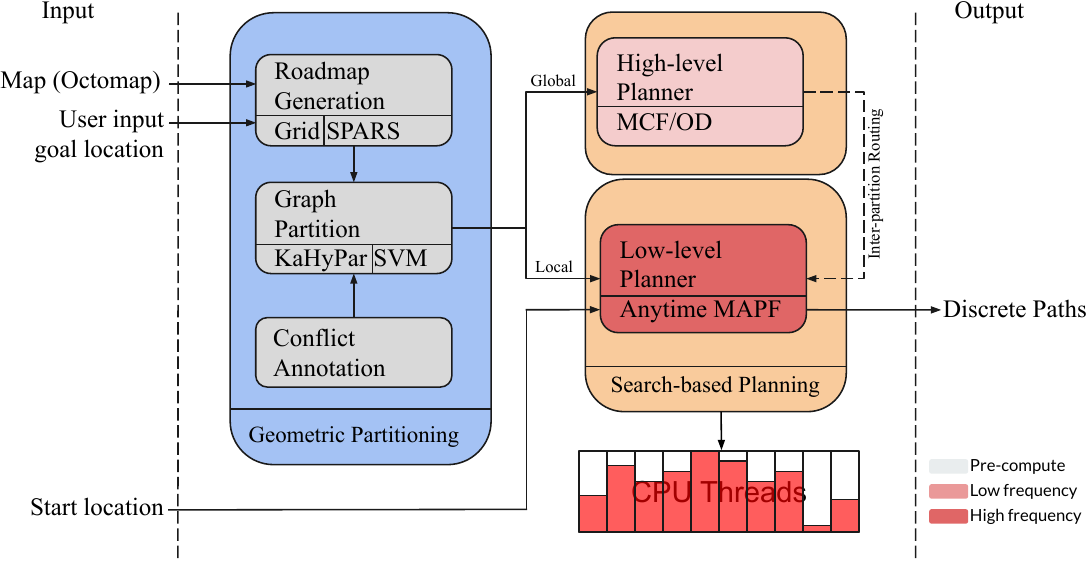}
    \caption{The user inputs a map, start, and goal locations. Our approach generates a geometric partition, distributes robots among  cells (hierarchical planner), and coordinates them within each cell in parallel.\label{fig:outline}}
    \vspace{-2em}
\end{figure}
\section{Preliminaries}
\label{sec:preliminaries}
\subsection{Multi-agent Pathfinding (MAPF) on Euclidean Graph}
\label{sec:preliminaries_mapf}
Consider an undirected graph $\mathcal{G}\!\!=\!\!\left(V, E\right)$ embedded in a Euclidean space, where each vertex $v\!\!\in\!\! V$ corresponds to a position in the free space $\mathcal{F}$ and each edge $(u, v) \!\!\in\!\! E$ denotes a path in $\mathcal{F}$ connecting vertices $u$ and $v$. For $N$ agents, we additionally require the existence of vertices $v^{i}_{g}$ and $v^{i}_{s}$, corresponding to the goal and start positions, $\mathbf{g}^{i}$ and $\mathbf{s}^{i}\!\! \in \!\! \mathbb{R}^{3}$ of robot $r^{i}$ (superscript $i$ represents  robot index), respectfully. At each time step $k$, an agent can either move to a neighbor vertex $\left(u^{i}_{k}, u^{i}_{k+1}\right)\!\!\in\!\! E$ or stay at its current vertex $u^{i}_{k+1}\!\! =\!\! u^{i}_{k}$, where $u^{i}_{k} \!\!\in\!\! V$ is the occupied vertex in the $i$-th agent's path at time step $k$. 
To respect vertex conflict constraints,  no two agents can occupy the same vertex simultaneously, i.e., $\forall k, i\!\!\neq\!\! j \!\!:\!\! u^{i}_{k} \neq u^{j}_{k}$. To respect edge conflict constraints, no two agents can traverse the same edge in the opposite direction concurrently, i.e., $\forall k, i\!\!\neq\!\! j\!\!:\!\! u^{i}_{k} \neq u^{j}_{k+1} \!\vee \!u^{i}_{k+1} \!\!\neq\!\! u^{j}_{k}$. 
The objective is to find conflict-free paths 
$\mathcal{P}^{i}\!\!=\!\!\left[u^{i}_{0}, \cdots, u^{i}_{T-1}\right]$, where $u^{i}_{0} \!\!=\!\! v^{i}_{s}$ and $u^{i}_{T-1} \!\!=\!\! v^{i}_{g}$ for all agents, and minimize cost, e.g., the sum over the time steps required to reach the goals of all agents or the makespan $T$. 
\subsection{MAPF for Embodied Agents and Conflict Annotation}
\label{sec:MAPFC}
Many works address MAPF for embodied agents~\cite{li2019multi, walker2018extended, yakovlev2017any}.
We adopt multi-agent pathfinding with generalized conflicts (MAPF/C), due to its flexibility with different shapes. Generalized conflicts, different from typical MAPF conflicts, include the extra conflicts caused by the robot embodiment~\cite{honig2018trajectory}. 
To account for the downwash effect between robots~\cite{yeo2015empirical}, we denote $\mathcal{R}_{\mathcal{R}}(\mathbf{p})$ as the convex set of points representing a robot at position $\mathbf{p}$, i.e., a robot-robot collision model. We follow the conflict annotation in~\cite{honig2018trajectory} to annotate the graph with generalized conflicts, defined as: 
\vspace{-0.5em}
\begin{align}\nonumber
\operatorname{conVV}(v)= & \left\{u \in V \mid\right. \\\nonumber
& \left.\mathcal{R}_{\mathcal{R}}(\operatorname{pos}(u)) \cap \mathcal{R}_{\mathcal{R}}(\operatorname{pos}(v)) \neq \emptyset\right\} \\\nonumber
\operatorname{conEE}(e)= & \left\{d \in E \mid \mathcal{R}_{\mathcal{R}}^{*}(d) \cap 
\mathcal{R}_{\mathcal{R}}^{*}(e) \neq \emptyset\right\}\\\nonumber
\operatorname{conEV}(e)= & \left\{u \in V \mid\right.
\left.\mathcal{R}_{\mathcal{R}}(\operatorname{pos}(u)) \cap \mathcal{R}_{\mathcal{R}}^{*}(e) \neq \emptyset\right\},
\end{align}
where $\operatorname{pos}(u) \!\! \in \!\! \mathbb{R}^{3}$ returns the position of vertex $u$. $\mathcal{R}_{\mathcal{R}}^{*}(e)$ is the $\textit{swept}$ collision model representing the set of points swept by the robot when traversing edge $e$. 

\section{Problem Formulation}
Consider a time-varying number of homogeneous non-point robots in workspace $\mathcal W$, 
which is partitioned into a union of disjoint convex polytopic cells. Robots must reach specified individual goal positions, which change over time, while avoiding collisions with robots and obstacles and obeying maximum cell influx limits $\boldsymbol{\theta}$ (influx refers to the number of robots entering a workspace cell).
A motivating scenario is a multi-UAV package delivery system, where the number of UAVs entering certain types of airspaces must be limited, and UAVs exit or enter the workspace to charge or redeploy. 
The problem above requires solving a path replanning problem that obeys  cell influx limits while handling new goal positions and a varying number of robots. While that is the general problem we aim to solve, the present work addresses a critical subproblem: the underlying planner that is repeatedly called to safely and efficiently route the robots through $\mathcal W$.

Now, consider $N$ (fixed) homogeneous non-point robots operating in the partitioned workspace. $\mathcal W$ contains obstacles defined as unions of convex polytopes $\mathcal{O}_{1}, \cdots, \mathcal{O}_{N_{obs}}$. We use a robot-environment collision model $\mathcal{R}_{\mathcal{E}}(\mathbf{p})$ that is distinct from $\mathcal{R}_{\mathcal{R}}(\mathbf{p})$.
The free space is $\mathcal{F} \!\! = \!\! \mathcal{W} \backslash (\bigcup_{h} \mathcal{O}_{h})  \ominus \mathcal{R}_{\mathcal{E}}(\mathbf{0})$, where  $\ominus$ is Minkowski difference.
For each robot $r^{i}$ at initial position $\mathbf{s}^{i}$, our algorithm finds a path to its goal position $\mathbf{g}^{i}$ such that there are no collisions (e.g., between robots or between robots and $\bigcup_{h}\mathcal{O}_h$) and the total number of robots that enter each cell is less than its user-defined influx $\theta_{m}$ (influx limits can vary by cell). 

To efficiently address the above problem, our framework has three components: geometric partitioning, high-level planner, and low-level planner, as depicted in Fig.~\ref{fig:outline}. Geometric partitioning divides the workspace into disjoint convex cells, where the plan can be computed in parallel. A centralized high-level planner regulates the congestion for each cell while guaranteeing inter-cell routing quality. An anytime low-level planner plans collision-free paths for robots within each cell. Initial planning includes pre-computation (the geometric partitioning), and replanning only involves high- and low-level planning. Once the initial plan is established, our algorithm runs in real-time for replanning.

\section{Geometric Partitioning}
The geometric partitioning of a bounded workspace 
consists of three steps: 1) roadmap generation, 2) graph partitioning and spatial linear separation, and 3) local goal generation. 
\subsection{Roadmap Generation}
A roadmap is an undirected graph, introduced in Sec.\ref{sec:preliminaries_mapf}, 
satisfying three properties: 1)  connectivity-preserving,
i.e., if a path between two points in $\mathcal{F}$ exists, there should be a path in the roadmap as well; 
2) optimality-preserving, i.e., the shortest path between two points in $\mathcal{F}$ can be well approximated by a path in the roadmap; and 3) sparse, i.e., have a small number of vertices and edges. 
In our experiments, we use a 6-connected grid graph, however, it can be generated by other methods, such as SPARS~\cite{dobson2014sparse}. 
\begin{figure}[t]
    \begin{subfigure}{0.115\textwidth}
       \centering {\footnotesize(a)}\includegraphics[height=.95\textwidth]{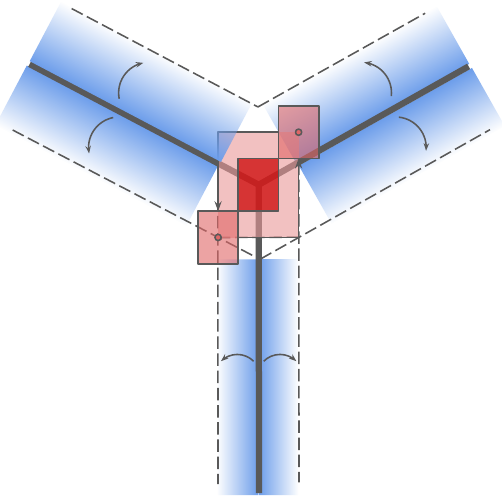}
    \label{fig:partition_generalized_conflict_1}
    \end{subfigure}
    \begin{subfigure}{0.115\textwidth}
        \centering {\footnotesize(b)}\includegraphics[height=.95\textwidth]{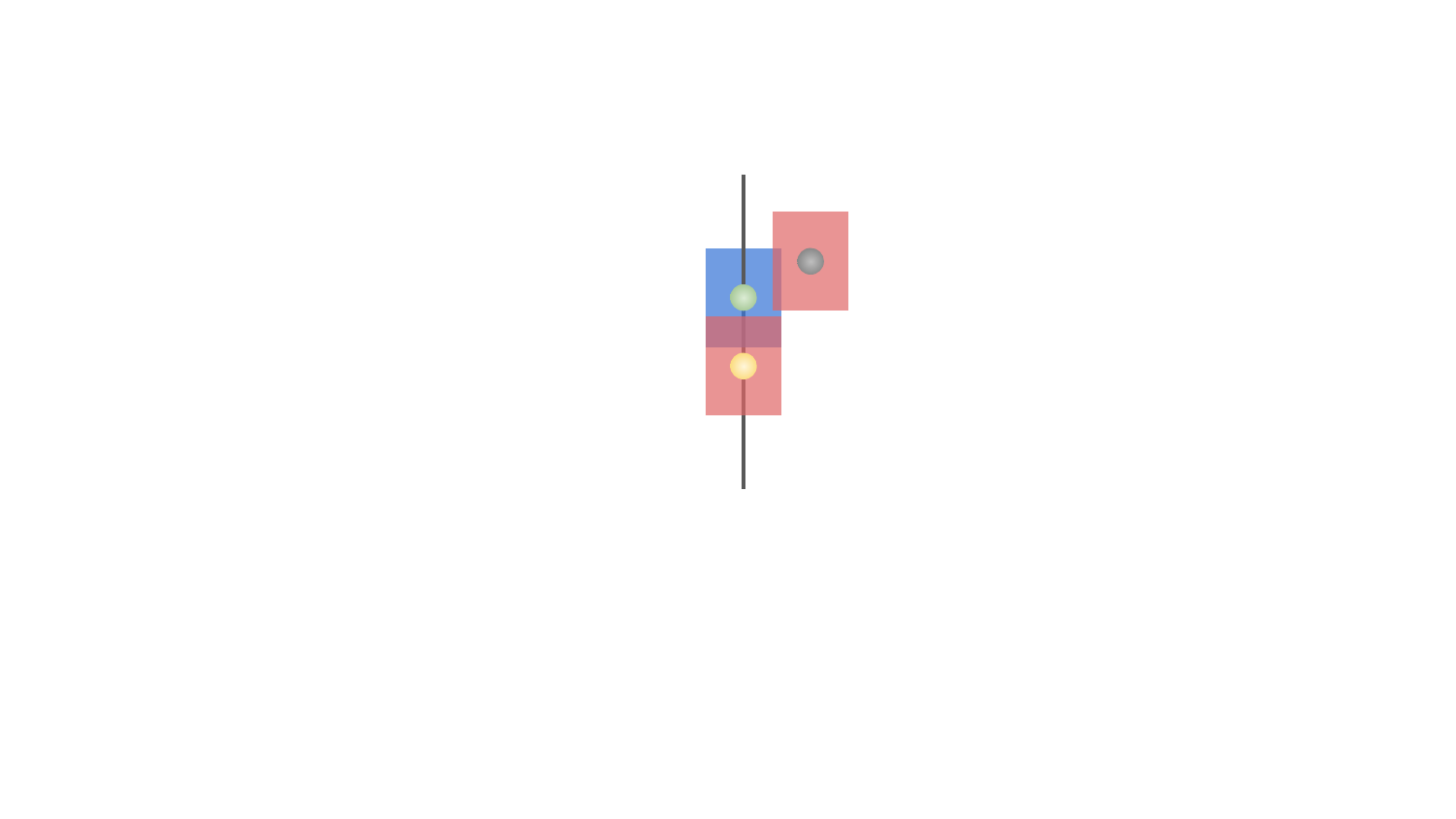}
    \label{fig:partition_generalized_conflict_2}
    \end{subfigure}
    \begin{subfigure}{0.115\textwidth}
        \centering {\footnotesize(c)}\includegraphics[height=.95\textwidth]{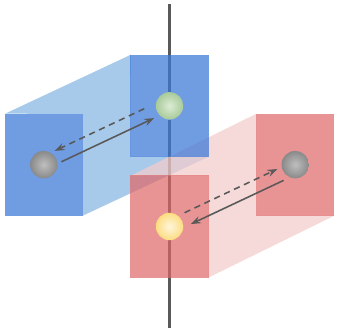}
    \label{fig:partition_generalized_conflict_3}
    \end{subfigure}
    \begin{subfigure}{0.115\textwidth}
        \centering {\footnotesize(d)}\includegraphics[height=.95\textwidth]{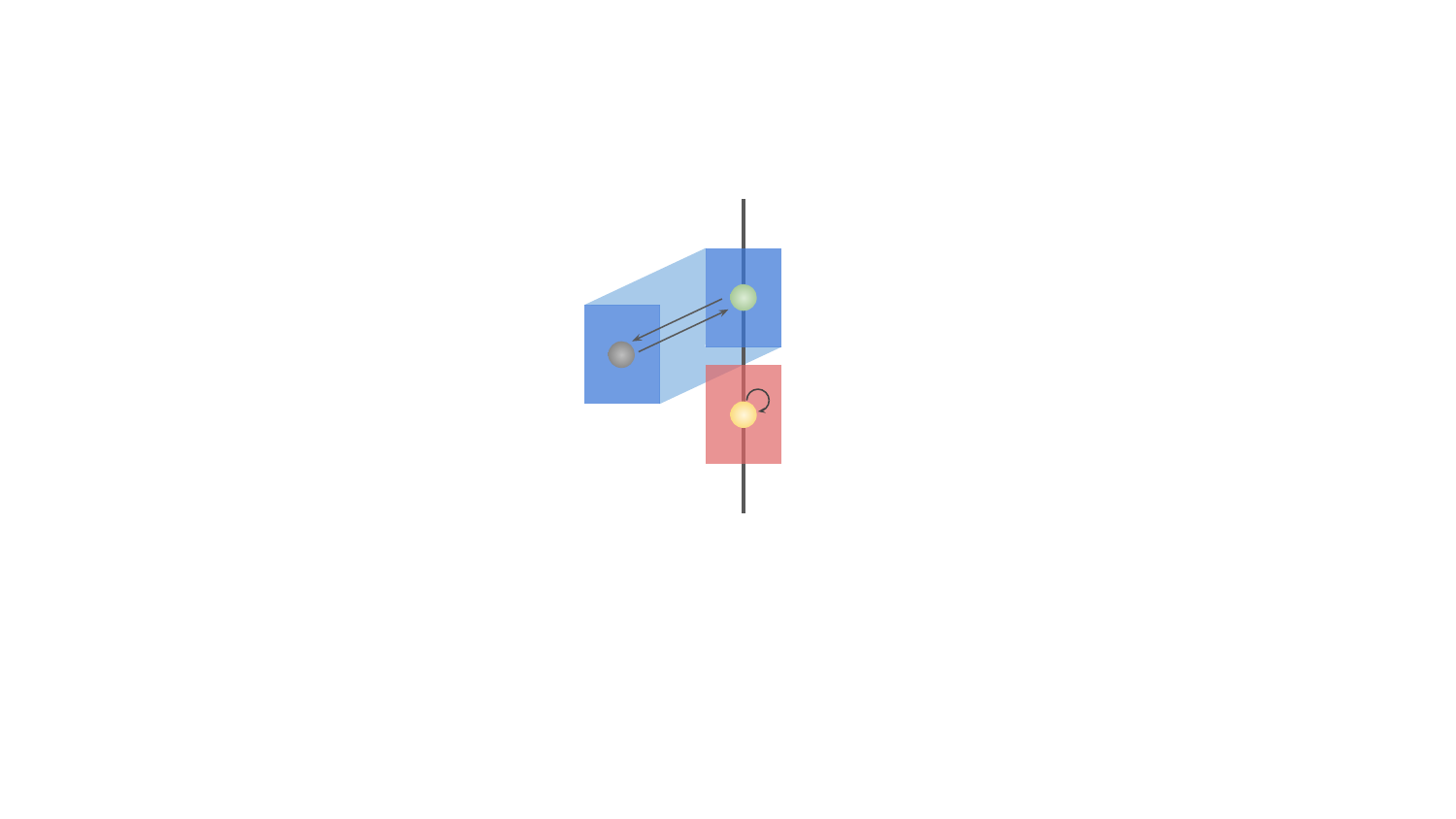}
    \label{fig:partition_generalized_conflict_4}
    \end{subfigure}
    \vspace{-1em}
\caption{(a) inter-robot collision configuration $\mathcal{C}_{col}$ and buffered hyperplanes. (b)-(d) vertex-vertex, edge-edge, and edge-vertex generalized conflicts across cells.\label{fig:partition_generalized_conflict}} 
\vspace{-2.0em}
\end{figure}
\subsection{Graph Partitioning and Spatial Linear Separation}
Our method partitions the workspace into disjoint convex cells and solves a MAPF instance in each cell. Partitioning has two benefits: 1) fewer robots for each subgraph, and 2) decomposed instances can be solved in parallel.
While the user can adopt any generic workspace partitioning approach, 
we propose a method that generates $Q$ convex polytopes. 
First, we use graph partitioning (KaHyPar~\cite{gottesburen2020advanced}) 
to group the roadmap into $Q$ \textit{balanced} (similar vertex numbers) subgraphs $\mathcal{G}_{m}\!\!=\!\!\left({V_{m}, E_{m}}\right)$, for $m\!\!=\!\!1,\ldots,Q$. 
\textit{Balanced} subgraphs lead to cells of similar volume and an even spread of robots.

We further enforce each cell, containing a subgraph, to be a convex polytope. Cell convexity prevents robots from penetrating into neighboring cells before exiting their current cells.
We use soft-margin support vector machines (SVM)~\cite{cortes1995support} to compute a hyperplane $H_{ml}$ between vertices of $\mathcal{G}_{m}$ and  $\mathcal{G}_{l}$, and reassign misclassified vertices. Here, the subscript $l$ denotes the index of the subgraphs with vertices connecting to vertices of $\mathcal{G}_{m}$. The resulting set of hyperplanes forms the $m$-th cell, denoted as $P_{m}$. 
\subsection{Local Goal Generation}
For navigating out of a cell, we generate candidate goal states on the faces between adjacent cells. For each cell $P_{m}$, we uniformly sample random local goals on the hyperplane $H_{ml}$ and add them as shared vertices to both $\mathcal{G}_{m}$ and $\mathcal{G}_{l}$. Despite being shared vertices, local goals generated by partition $P_{m}$ have in-edges from $P_{m}$ and out-edges to $P_{l}$ to avoid collision during cell transit. Thus, this part of the graph is directed. 
To enable parallel computation,  there must be no communication between cells. Thus, the cell roadmap $\mathcal{G}_{m}$ is modified such that the planned paths are collision-free when crossing cells without information exchange between cells. The following properties should be satisfied:

\noindent\textbf{P1}: Robots avoid collision when stationary at vertices (local-goal or non-local-goal vertices) of different cells (generalized vertex-vertex conflict across cells), i.e., $\forall i, j, m\!\!\neq\!\! l\!:\! v^{i}_{m} \!\!\notin\!\! \operatorname{conVV}(v^{j}_{l})$, where the superscript in $v^{i}_{m}$ refers to the vertex index and the subscript refers to the cell index.


\noindent\textbf{P2}: Robots avoid collision when traversing edges 
between different cells (generalized edge-edge conflict across cells), i.e., $\forall i, k, m\!\!\neq\!\! l: e^{i}_{m}\!\!:=\!\!(v^{i}_{m}, v^{j}_{m}) \notin \operatorname{conEE}(e^{k}_{l})$, here the superscript in $e^{i}_{m}$ refers to the edge index. 

\noindent\textbf{P3}: Robots avoid collision when one robot is stationary at a vertex while the other robot is traversing an edge of a different cell (generalized edge-vertex conflict across cells), i.e., $\forall i, k, m\!\!\neq\!\! l: e^{i}_{m}:=(v^{i}_{m}, v^{j}_{m}), v^{k}_{l} \notin \operatorname{conEV}(e^{i}_{m})$.

We depict violations of P1-P3 in Fig.~\ref{fig:partition_generalized_conflict}b-\ref{fig:partition_generalized_conflict}d. To prevent conflicts between stationary robots at local-goal and non-local-goal vertices across cells, we buffer the separating hyperplane by the inter-robot collision configuration, $\mathcal{C}_{col}$ (c.f. Fig.~\ref{fig:partition_generalized_conflict}a), which is computed as $\mathcal{C}_{col}(\mathbf{p}) = \mathcal{R}_{\mathcal{R}}(\mathbf{p}) \oplus \mathcal{R}_{\mathcal{R}}(\mathbf{0})$, where $\oplus$ is the Minkowski sum. The buffering is achieved by modifying the offset $\mathcal{H}_{a}^{'} = \mathcal{H}_{a} + \mathrm{max}_{\mathbf{y}\in \mathcal{C}_{col}(\mathbf{0})} \mathcal{H}_{\mathbf{n}} \!\cdot\! \mathbf{y}$ of the hyperplane, where $\mathcal{H}_{a}$ and $\mathcal{H}_{\mathbf{n}}$ are the offset and the normal vector of the hyperplane. Given the buffered hyperplanes, all non-local-goal vertices within the buffered region are removed, preventing collisions between stationary robots at local-goal and non-local-goal vertices. The local goals sampled on the hyperplane are confined within the blue region in Fig.~\ref{fig:partition_generalized_conflict}a to avoid vertex-vertex conflicts between stationary robots at local goals of different separating hyperplanes. 
The buffering satisfies P1 between local-goal and non-local-goal vertices across cells and P1, P2, and P3 between non-local-goal vertices across cells. 
To satisfy P1 between local-goal vertices across cells, we uniformly randomly sample points on the hyperplane and reject those that violate P1. 
For P2 and P3 between local-goal and non-local-goal vertices, we add the connection between the sampled local-goal vertex to valid (no collision with the environment) non-local-goal vertices within a radius on both sides of the hyperplane. The local goal is removed if any connected edge violates P2 or P3 (see Fig.~\ref{fig:partition_generalized_conflict}c,~\ref{fig:partition_generalized_conflict}d). Otherwise, the edges are added to the cell roadmaps. Figure~\ref{fig:geometric_partition}  depicts an exemplar partition. 
\begin{figure}[t]
    \hspace{3em}
    \begin{subfigure}{0.13\textwidth}
       \centering {\footnotesize(a)}\includegraphics[height=.95\textwidth]{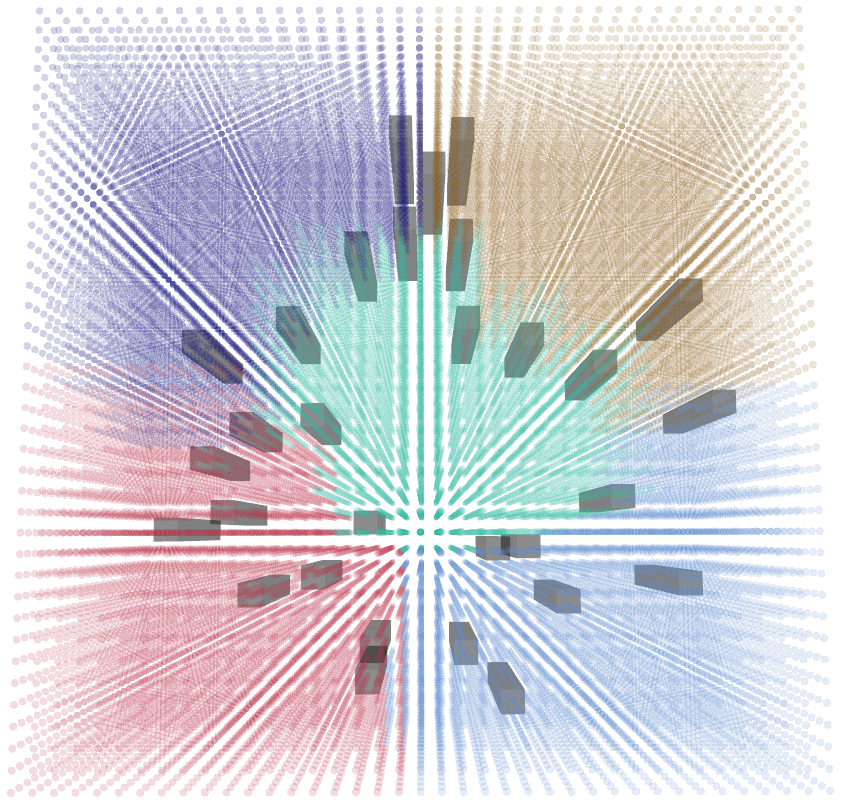}
    \end{subfigure}
    \hspace{2em}
    \begin{subfigure}{0.13\textwidth}
        \centering {\footnotesize(b)}\includegraphics[height=.95\textwidth]{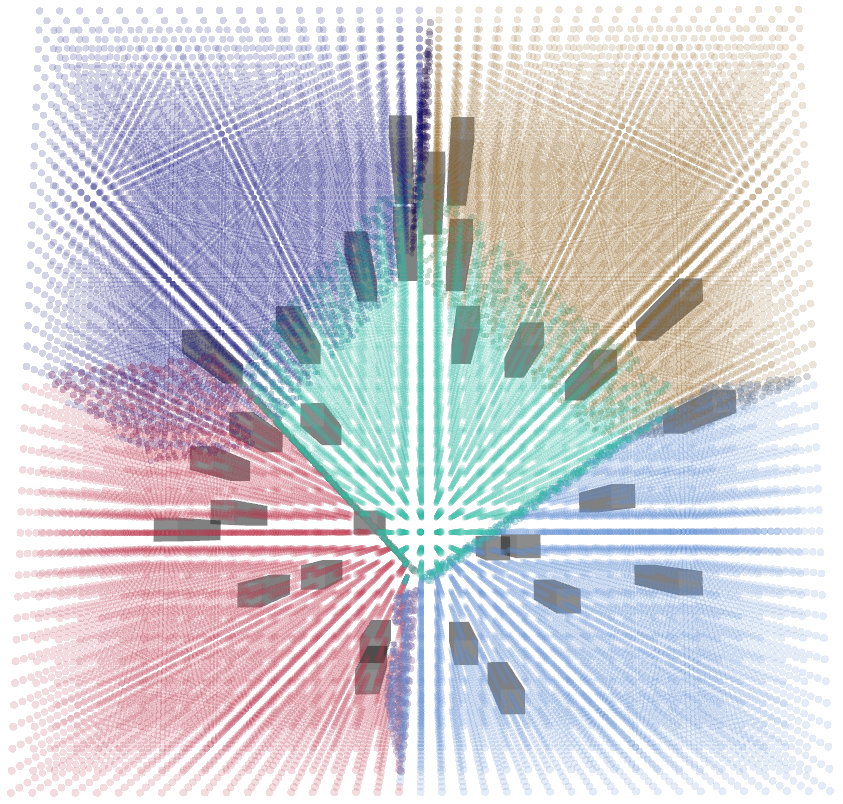}
    \end{subfigure}
\caption{Geometric partitioning with $Q\!\!=\!\!5$, before spatial linear separation (a), and after (b). Each cell is a convex polytope in the workspace.\label{fig:geometric_partition}} 
\vspace{-1.5em}
\end{figure}

\section{Multi-commodity Flow with Optimal Detour}
\label{sec:MCF}
Our hierarchical approach relies on high-level planning to 1) regulate cell congestion and 2) preserve the bounded-suboptimality of inter-cell routing solutions. 
Thus, our high-level planner simplifies  MAPF instances within cells and leads to real-time replanning.
We abstract the partition as a directed graph $\mathcal{G}_{p}\!\!=\!\!\left(V_{p}, E_{p}\right)$, where the vertices (nodes) represent cells and edges connect neighboring cells that share at least one face. Edges are weighted according to Euclidean distance between the cells' centers of mass.
The high-level planner finds an inter-cell routing $\mathcal{U}^{i}\!\!=\!\! \left[ P^{i}_{s}, \cdots, P^{i}_{g}\right]$ for each robot $r^{i}$, where $P^{i}_{s}$ and $P^{i}_{g}$ are its start and goal cell, satisfying: 1) the influx of cell $m$ is under a user-defined value $\theta_{m}$, and 2) $cost(\mathcal{U}^{i})\!\!\leq\!\! w_{mcf}\cdot cost(\mathcal{U}^{i,*})$. Here, $w_{mcf} \!\!\geq\!\! 1$ is a scalar representing the suboptimality bound for the routing solutions. $\mathcal{U}^{i,*}$ is the optimal routing of robot $r^{i}$. 
The influx of a node represents the number of robots entering the cell; we define influx formally in the following section.
\subsection{High-level Planning Formulation}
\label{sec:high_level_formulation}
The SOTA partition-based MAPF solvers~\cite{zhang2021hierarchical, leet2022shard} suffer from cell congestion,  leading to hard instances in certain cells, causing computational bottlenecks. To address this, we formulate the inter-cell routing as a variant of the MCF problem and propose multi-commodity flow with optimal detour (MCF/OD), which optimally distributes robots among cells such that the number of robots entering any intermediate cell $m$ (not the start or goal cells of the robot teams) is under a user-defined value $\theta_{m}$, if a solution exists.

Specifically, robots sharing the same start cell $P_{s}$ and goal cell $P_{g}$ are one commodity $c_{sg}$. The commodities set $C\!\!=\!\!\left\{c_{1}, \cdots, c_{O}\right\}$ includes all commodities given the robot positions. Solving the MCF problem results in optimal flows $\{ y^{*}_{sgml}\}$, that is the number of robots in commodity $c_{sg}\!\!\in\!\! C$ traversing along edge $e_{ml}\!\in\! E_{p}$.    
In our minimal influx MCF formulation, the optimal flow solutions lead to minimized intermediate cell influx and the most dispersed routing. 
We define the cell influx of $P_{l}$ as the total number of entering robots, that is, $\sum_{c_{sg}\in C, e_{ml}\in E_{p}} y_{sgml}$.
We formulate the following integer linear program (ILP):
\begin{subequations}
\begin{IEEEeqnarray}{rCl'rCl'rCl}
\argmin_{  \left\{y_{sgml}\right\}} \quad \alpha \!&~&\cdot \! \sum_{c_{sg}}\mathcal{L}_{sg} + \beta \! \cdot \! \mathcal{L}_{in}   \\
\text{s.t.}\sum_{e^{ml},e^{ln}\in E_{p}} \!\!\!\!&&y_{sgml} - y_{sgln} = 
    \begin{cases} 
    |c_{sg}|,  &l = g\\
    -|c_{sg}|,  \!\!\!\!&l=s\\
    0,  &o.w.
    \end{cases}, \forall c_{sg} \in C \label{constr:mcf}\\
\mathcal{L}_{sg} &&\geq y_{sgml},~\forall e_{ml}\in E_{p}, \forall c_{sg} \in C \label{constr:L_sg}\\
\mathcal{L}_{in} &&\geq \mathbf{I}^{\top}_{v^{i}}\mathbf{y},~\forall v^{i} \in V_{p} \label{constr:L_in}\\
y_{sgml} &&\in \left[0, |c_{sg}|\right],~\forall e_{ml}\in SP_{sg}, \forall c_{sg} \in C\label{constr:SP_constr1}\\
y_{sgml} &&= 0,~\forall e_{ml}\notin SP_{sg}, \forall c_{sg} \in C.\label{constr:SP_constr2}
\end{IEEEeqnarray}
\end{subequations}
Here, $\mathcal{L}_{sg}$ represents the maximum flow for commodity $c_{sg}\in C$, defined by (\ref{constr:L_sg}). $\mathcal{L}_{in}$ represents the maximum influx among all the cells, defined by (\ref{constr:L_in}), where $\mathbf{I}_{v^{i}} = \left[ I_{y_{1}}, \cdots, I_{y_{F}}\right]$, where $y_{f} \in \left\{y_{sgml}\right\}, I_{y_{f}}\in \left\{0, 1\right\}$ is an indicator function that returns $1$ when $y_{f}$ has positive flow to an intermediate vertex $v^{i}\in V_{p}$. $\mathbf{y}$ is the vector of all flows. By minimizing both, the objective function penalizes the maximum congestion among all cells and disperses the flows related to one commodity. We weight $\mathcal{L}_{sg}$ and $\mathcal{L}_{in}$ with coefficients $\alpha$ and $\beta$, respectively, and set $\beta \gg \alpha$ to prioritize minimizing $\mathcal{L}_{in}$. (\ref{constr:mcf}) is the set of constraints for MCF formulation. The set of constraints (\ref{constr:SP_constr1}, \ref{constr:SP_constr2}) enforces the flows $y_{sgml}$ onto the shortest paths between the start and goal cells $SP_{sg}$ guaranteeing the solution optimality.


Despite the minimized influx to the intermediate cells, the  minimal influx MCF formulation leads to congestion in certain cells due to tight constraints on the shortest paths (robots' shortest paths may intersect at certain cells). To detour the robots optimally, ensuring that the number of robots that enter any intermediate cell $m$ remains below its influx limit $\theta_{m}$ and no unnecessary detour is introduced, we present a complete and optimal solver, MCF/OD, in Algo.~\ref{alg:MCF_optimal_detouring}. It maintains a conflict tree and resolves congestion iteratively. The function $congestionDetection(\mathcal{G}_{p}, P.solution, \boldsymbol{\theta})$ computes the influx for each cell in $\mathcal{G}_{p}$, given the flow solution $P.solution$ and returns the set of cells with their influx larger than the corresponding limit. The function $getAllConflict(P)$ returns the set of all commodities passing congested cells. 
\setlength{\textfloatsep}{0pt}
\begin{algorithm}[t]
\scriptsize
\caption{MCF/OD}
\label{alg:MCF_optimal_detouring}

\KwIn{a multi-commodity flow instance.}
Root.conflict\_counts$[o]$ = $1, \text{for } o = 1,\cdots, O$\\
Root.shortest\_paths = solve shortest path for each commodity in the $\mathcal{G}_{p}$\\ 
Root.cost = sum of the largest cost among all paths in each commodity\\
Root.solution = solve MCF with root shortest path constraints\\ 
insert Root to OPEN\\
Visited[Root.conflict\_counts] = True\\
\While{OPEN not empty} {
    $P \leftarrow$ node from OPEN with the lowest cost\\
    $CS \leftarrow congestionDetection(\mathcal{G}_{p}, P.solution, \boldsymbol{\theta})$\\
    \If{CS is empty} {
        \Return{P.solution}
    }
    $C \leftarrow $ $getAllConflict\left(P, CS\right)$.\\
    \For{commodity $c_{o}\in C$} {
        $A\leftarrow $ new node.\\
        $A$.conflict\_counts $\leftarrow P$.conflict\_counts + $\mathbf{I}_{o}$\\
        \If{Visited[A.conflict\_counts]} {
            Continue
        }
        $A$.shortest\_paths $\leftarrow P$.shortest\_paths\\
        $p_{k} \leftarrow generateKShortestPath(c_{o}, \mathcal{G}_{p},$ $A$.conflict\_counts$[o])$\\
        Update $A$.shortest\_paths[o] with $p_{k}$ \\
        \If{$p_{k} \leq w_{mcf} \cdot p_{min}$} {
            Update $A$.cost\\
            Update $A$.solution by solving MCF with updated shortest paths\\ 
            \If {$A$.cost $\leq \infty$} {
            Insert $A$ to OPEN
        }
        }
    }
}
\end{algorithm}
\begin{theorem}
MCF/OD is complete on a locally finite graph. 
\end{theorem}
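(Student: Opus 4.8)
The plan is to prove completeness in the sense standard for search-based planners: on a locally finite graph the procedure always halts, it returns a flow satisfying every cell influx limit $\theta_m$ whenever one exists within the suboptimality bound $w_{mcf}$, and otherwise it empties OPEN and reports failure. The organizing observation is that each conflict-tree node is determined entirely by its \texttt{conflict\_counts} vector $\boldsymbol{\kappa}\in\mathbb{Z}_{\ge1}^{O}$: since \texttt{shortest\_paths}[o] is exactly $generateKShortestPath$ evaluated at $\kappa[o]$, the edge sets $SP_{sg}$, the MCF solution, and the output of $congestionDetection$ all depend only on $\boldsymbol{\kappa}$. I would therefore recast the algorithm as a best-first search over the lattice of admissible count-vectors, with a directed edge from $\boldsymbol{\kappa}$ to $\boldsymbol{\kappa}+\mathbf{I}_o$ for every commodity $o$ that carries flow through a congested cell, and prove three things about this search: it halts, it is sound, and it reaches a feasible node whenever a feasible count-vector exists.

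First I would establish termination, which is where local finiteness enters. Because every vertex of $\mathcal{G}_p$ has finite degree, for each commodity $c_{sg}$ only finitely many paths have cost at most $w_{mcf}\cdot p_{min}$; consequently the guard $p_k \le w_{mcf}\cdot p_{min}$ eventually fails as $\kappa[o]$ grows, so each coordinate of $\boldsymbol{\kappa}$ is bounded and the admissible lattice is finite. Since \texttt{Visited} is keyed on \texttt{conflict\_counts} and a repeated vector is skipped, every count-vector is expanded at most once; hence only finitely many nodes are generated and the loop exits. I would stress that without local finiteness the bounded-cost detour set, and thus the search space, could be infinite, which is precisely why the hypothesis is needed.

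Soundness is immediate: the algorithm returns $P.solution$ only when $congestionDetection$ yields the empty set, so the returned flow respects all $\theta_m$ while constraints (\ref{constr:SP_constr1})--(\ref{constr:SP_constr2}) keep it within the bound $w_{mcf}$. For the ``finds a solution if one exists'' direction I would prove a reachability lemma by induction on the lattice: if some feasible vector $\boldsymbol{\kappa}^{*}$ exists and the current node $\boldsymbol{\kappa}\le\boldsymbol{\kappa}^{*}$ is infeasible, then a commodity $o$ with $\kappa[o]<\kappa^{*}[o]$ must route through a congested cell, so the child $\boldsymbol{\kappa}+\mathbf{I}_o\le\boldsymbol{\kappa}^{*}$ is generated. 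Because the lattice is finite and \texttt{Visited}-pruning never discards a vector before it has been expanded, the search cannot stall before reaching a feasible node; contrapositively, an empty OPEN certifies that no admissible feasible vector exists.

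The hard part is the reachability lemma, and within it the reconciliation of two different notions of feasibility. The MCF objective minimizes the \emph{maximum} influx $\mathcal{L}_{in}$ rather than enforcing the per-cell limits $\theta_m$ directly, so I cannot simply invoke monotonicity to claim that any count-vector admitting a $\boldsymbol{\theta}$-respecting flow makes $congestionDetection$ succeed. The crux is therefore a flow-exchange argument showing that at an infeasible $\boldsymbol{\kappa}\le\boldsymbol{\kappa}^{*}$ the surplus flow through an overloaded cell is necessarily carried by a commodity whose admissible path set is still strictly smaller than at $\boldsymbol{\kappa}^{*}$ -- so that commodity is genuinely conflicting and the productive branch toward $\boldsymbol{\kappa}^{*}$ is always available. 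Making this precise while accounting for the coupling that $\mathcal{L}_{in}$ and $\mathcal{L}_{sg}$ induce among commodities is where I expect the real work to lie.
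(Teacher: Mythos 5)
Your termination and soundness halves are correct, and the termination argument is actually cleaner than the paper's: the paper proves finiteness by observing that node costs are monotonically non-decreasing and that a locally finite graph admits only finitely many equal-cost routings per commodity, so each cost layer is exhausted in finitely many expansions; your finite-lattice argument (the guard $p_k \le w_{mcf}\cdot p_{min}$ caps each coordinate of $\boldsymbol{\kappa}$ because only finitely many paths fit within the bound, and \texttt{Visited} prevents re-expansion) reaches the same conclusion more directly and locates the role of local finiteness in the same place. But the theorem's substance is the ``finds a solution whenever one exists'' direction, and there your proposal stops short: you explicitly defer the flow-exchange argument at the heart of the reachability lemma to ``where I expect the real work to lie,'' so the central step is named but not proven. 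Worse, the inductive step as you sketch it can fail: $getAllConflict$ branches only on commodities carrying flow through congested cells in the solver's flow at $\boldsymbol{\kappa}$, and nothing forces any such commodity to satisfy $\kappa[o]<\kappa^{*}[o]$ --- the surplus may be carried entirely by commodities already at or above their $\kappa^{*}$ counts, so every generated child leaves the sublattice below $\boldsymbol{\kappa}^{*}$. You would then need to exploit that feasibility persists above $\boldsymbol{\kappa}^{*}$ (admissible path sets grow monotonically in $\boldsymbol{\kappa}$) and show OPEN cannot empty before some generated node's admissible set contains a $\boldsymbol{\theta}$-respecting flow; that argument is absent.

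One observation to help you close the objective-mismatch half of your crux: with uniform limits $\theta_m\equiv\theta$ (as in the paper's experiments) and $\beta\gg\alpha$ acting lexicographically, the min-max-influx solution respects $\boldsymbol{\theta}$ whenever \emph{any} flow in the admissible set does, so $congestionDetection$ succeeds exactly when the node is feasible; the mismatch you identify is genuine only for heterogeneous $\theta_m$, where it threatens the algorithm itself, not merely the proof. For comparison, the paper takes a coarser route than your lattice induction: it argues that the monotone, layer-by-layer cost enumeration includes, within finitely many expansions, any finite combination of bounded-suboptimal edges, and concludes that once all unique edges have been added the ILP decides solvability. This sidesteps your reachability lemma but leaves implicit precisely the step you isolated --- that the conflict-driven branching actually generates every needed count vector --- so your proposal's honesty exposes a point the paper glosses over, yet as submitted it does not constitute a proof of the theorem.
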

\begin{proof}
The cost of a conflict tree node equals the sum of the costs of the longest routing (without cycles) in all commodities. For each expansion, $k$-th shortest paths will be added to the commodity,
which means the cost of the conflict tree is monotonically non-decreasing. For each pair of costs $X < Y$, the search will expand all nodes with cost $X$ before it expands the node with cost $Y$. As the graph is locally finite, there are a finite number of routing with the same cost for each commodity. Thus, expanding nodes with cost $X$ requires a finite number of iterations.

To include an arbitrary combination of $\hat{Z}$ unique edges 
of all commodities, the minimal cost of the conflict tree node is $Z$. $Z$ is finite, as the worst-case scenario is to include all the cell routing within the suboptimality bound. 
Since we are considering a graph with well-defined edge weights and a finite number of commodities, the worst cost is finite. 
For a finite cost $Z$, because the conflict tree node cost is monotonically non-decreasing and only a finite number of nodes with the same cost exists, we can find arbitrary combinations of $\hat{Z}$ unique edges in finite expansions. Thus, if a solution exists by including a combination of $\hat{Z}$ unique edges in the MCF, the algorithm can find it within finite expansions.
If all the unique edges have been added to the MCF solver and the optimization cannot find the solution that satisfies the user-defined influx limit, the problem is identified as unsolvable. 
\end{proof}
\begin{theorem}
MCF/OD is optimal. If a solution is found, it will have the lowest possible cost, i.e., the sum of the costs of the longest routing in all commodities will be minimized if a solution is found.
\end{theorem}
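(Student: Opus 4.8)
The plan is to prove optimality by the standard best-first (conflict-tree) argument, in the spirit of optimality proofs for conflict-based search, reusing the two facts already secured while proving completeness (Theorem~1): the node cost is monotonically non-decreasing from parent to child, and OPEN is expanded in non-decreasing cost order. Throughout I identify the quantity being minimized with the node cost itself, i.e.\ the sum over commodities of the longest (cycle-free) routing granted to that commodity. The argument then needs two ingredients: a \emph{least-cost-first} lemma (the first congestion-free node dequeued is minimum-cost among all congestion-free nodes of the tree), and a \emph{search-space completeness} claim (every relevant per-commodity path-budget assignment is generated), after which the result follows by contradiction.

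First I would establish the least-cost-first lemma. Because each expansion only grants some commodity $c_o$ an additional, strictly longer $p_k$ via $generateKShortestPath$, the cost is non-decreasing along every root-to-leaf branch; so when a node $N$ is dequeued, every node of strictly smaller cost has already been dequeued. Next I would argue that the branching is exhaustive: from any congested node we spawn one child per commodity crossing a congested cell, each incrementing \texttt{conflict\_counts}, so over the whole tree every combination of per-commodity budgets within the bound $p_k \le w_{mcf}\cdot p_{min}$ is reachable. The Visited[\,\texttt{conflict\_counts}\,] test prunes only duplicate budget vectors, which index identical MCF instances and hence identical solutions, so it removes nothing essential. Combining these, the first dequeued node $P$ with empty $CS$ has the minimum cost among all congestion-free nodes in the $w_{mcf}$-bounded conflict tree.

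The proof then closes by contradiction. Suppose a congestion-free solution of cost $c^\ast < P.\text{cost}$ existed. By search-space completeness it corresponds to a node $N^\ast$ at which each commodity has been granted paths exactly up to the longest route it uses, so that $N^\ast.\text{cost}=c^\ast$; by monotone non-decreasing cost along the root-to-$N^\ast$ branch, $N^\ast$ (or a frontier ancestor with cost $\le c^\ast$) would be dequeued before $P$. At that dequeue $N^\ast$ has empty $CS$, so MCF/OD would have returned $N^\ast.\text{solution}$ first --- contradicting that $P$ was the first returned node. Hence $P.\text{cost}=c^\ast$, and the returned solution attains the minimum possible sum of longest-routing costs.

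The hard part will be the interface between the MCF subroutine and the discrete feasibility test $congestionDetection(\cdot,\cdot,\boldsymbol{\theta})$. The test is applied to the single flow $P.\text{solution}$ produced by the surrogate ILP minimizing $\alpha\sum_{c_{sg}}\mathcal{L}_{sg}+\beta\,\mathcal{L}_{in}$ --- the max influx $\mathcal{L}_{in}$ plus dispersion, \emph{not} per-cell satisfaction of the (possibly nonuniform) limits $\theta_m$. For the least-cost-first lemma to be sound I must verify that, whenever a $\boldsymbol{\theta}$-respecting flow exists on the paths available at $N^\ast$, the flow the solver actually returns also passes $congestionDetection$ (so $CS$ is empty there); otherwise a genuinely feasible low-cost node is falsely branched and a higher-cost node returned, breaking optimality. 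This requires ruling out ties among equal-$\mathcal{L}_{in}$ flows that would emit a $\theta_m$-violating flow while a feasible one is available, and confirming that the cycle-free ``longest routing'' used to define $N.\text{cost}$ makes node-cost coincide exactly with solution-cost at $N^\ast$. Settling this coupling is the crux; the remaining best-first bookkeeping is routine given Theorem~1.
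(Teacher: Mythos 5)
Your core argument is exactly the paper's: the paper's entire proof consists of observing that MCF/OD is a best-first search and that inserting the $k$-th shortest path makes descendant cost monotonically non-decreasing, hence the first congestion-free node dequeued is cost-minimal. Your least-cost-first lemma and closing contradiction are the standard expansion of those two sentences, and your identification of the minimized quantity with the node cost (sum over commodities of the longest granted routing) is the framing the paper itself uses, so no further reconciliation of ``node cost'' versus ``solution cost'' is needed within its conventions. Two bookkeeping corrections, though. First, $generateKShortestPath$ returns a path no \emph{shorter} than the current longest, not strictly longer --- $k$-th shortest paths can tie --- so costs are non-decreasing rather than strictly increasing; your argument survives, but as written the phrase ``strictly longer $p_k$'' is wrong. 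Second, your search-space completeness claim (``every combination of per-commodity budgets within the bound is reachable'') overstates what the algorithm does: branching is restricted to the commodities returned by $getAllConflict$, i.e., those whose flow crosses a congested cell in the \emph{current node's} ILP solution, so most budget vectors are never generated. What you need instead is the CBS-style invariant: any $\boldsymbol{\theta}$-feasible assignment must reroute some robots contributing to each congested cell, those robots belong to conflicting commodities, and hence some generated child keeps the target assignment reachable. The paper does not spell this out either, but your version as stated is not what Algorithm~1 guarantees.

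The crux you flag in your last paragraph --- whether the surrogate ILP, which minimizes $\alpha\sum_{c_{sg}}\mathcal{L}_{sg}+\beta\,\mathcal{L}_{in}$ rather than enforcing the limits $\theta_m$, necessarily emits a flow that passes $congestionDetection$ whenever a $\boldsymbol{\theta}$-respecting flow exists on the node's granted paths --- is genuinely absent from the paper's proof, which is silent on the interface entirely. You can settle it yourself in the uniform-limit case used in the paper's experiments ($\theta_m \equiv \theta$): with $\beta \gg \alpha$ the ILP is effectively a min-max-influx solver, a flow passes the test iff its maximum influx is at most $\theta$, and every ILP-optimal flow attains the minimum $\mathcal{L}_{in}$; so if any feasible flow exists on the granted paths, every optimizer output passes, and the tie-breaking worry evaporates. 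For nonuniform $\boldsymbol{\theta}$, however, your suspicion is correct and is a real gap in the theorem as stated (the paper explicitly allows cell-varying limits): with limits $(10,5)$ on two intermediate cells, the min-max objective prefers influxes $(6,6)$ over the feasible $(10,4)$, so a feasible node is falsely branched and the search can return a strictly costlier solution --- breaking optimality (and completeness) unless the ILP is patched, e.g., by adding hard constraints $\mathbf{I}^{\top}_{v^{i}}\mathbf{y} \leq \theta_{i}$ or minimizing the maximum influx normalized by $\theta_i$. So your proposal matches the paper's route but exceeds its rigor; to close it as a proof you must either restrict to uniform limits or repair the ILP--feasibility coupling, since the ``routine best-first bookkeeping'' is indeed all the paper provides.
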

\begin{proof}
MCF/OD is a best-first search. In each expansion, the $k$-th shortest path for the selected commodity is inserted. Thus, the cost of a descendant node is monotonically non-decreasing. Therefore, if a solution is found, it is the optimal solution w.r.t. the cost. 
\end{proof}
MCF/OD can find the optimal detouring solution. However, the complexity is high due to solving an ILP in each expansion. 
To tackle many commodities in a large partition, we propose another efficient detour algorithm, one-shot MCF, which solves MCF once. One-shot MCF augments the shortest paths in \eqref{constr:SP_constr1}, \eqref{constr:SP_constr2} to include all the $w_{\mathrm{mcf}}$ bounded-suboptimal paths for each commodity. We employ the $k$-th shortest path routing algorithm to find all the candidate paths. The proposed One-shot MCF is complete as it includes all bounded-suboptimal paths for each commodity. While it does not optimize for the routing length for all commodities, it optimizes for minimum influx.
Intuitively, MCF/OD adds bounded-suboptimal paths iteratively to relax the constraints and terminates once all cell influx limits are satisfied. On the other hand, One-shot MCF adds all bounded-suboptimal paths at once and optimizes for the minimum influx, so it could result in unnecessary detour.

In each high-level planning iteration, we run both MCF/OD and One-shot MCF in parallel. If MCF/OD times out, we use the solution generated from One-shot MCF. The high-level replanning happens every $\delta_{h}$ time interval.




\section{Low-level Planner}
Within each cell, the low-level planner, or the cell planner,
computes collision-free paths that navigate robots to their local goals in an anytime fashion.
The cell planner can be divided into three steps: 1) local goal assignment
, 2) anytime MAPF/C generates discrete paths, and 3) cell-crossing protocol for non-stop transiting between cells. 

\subsection{Local Goal Assignment}
As the first step, local goal assignment aims to route robots to the closest local goals while spreading out robots optimally 
by solving the following ILP:
\begin{subequations}
\begin{IEEEeqnarray}{rCl'rCl'rCl}
\argmin_{\boldsymbol{\mathbf{A}}} \quad \sum_{ij} A_{ij} && \cdot D_{ij} + \alpha \sum_{j} u_{j} + \beta U  \\
\text{s.t.} \quad \sum_{j} A_{ij} && = 1, ~\forall i \\ 
U \geq u_{j} && \geq \sum_{i}A_{ij} - 1, ~\forall j,
\end{IEEEeqnarray}
\end{subequations}
where $A_{ij} \!\!\in\!\! \{0,1\}$ indicates if robot $r^{i}$ is assigned to the $j$-th local goal, denoted as $lg^{j}$. $D_{ij}$ is the Euclidean distance between $r^{i}$ and $lg^{j}$. 
Auxiliary variables $u_{j}$ in the objective function minimize the number of robots queueing at local goal $lg^{j}$, prioritizing  filling less congested local goals first. 
Auxiliary variable $U$ in the objective function minimizes the maximum number of robots waiting in queue among all the local goals. 
This leads to evenly routing robots to different local goals to reduce congestion. A local goal is occupied if assigned with at least one robot. Thus, the number of robots waiting in queue for a local goal $lg^{j}$ is $\left(\sum_{i} A_{ij} -1\right)$. 
\subsection{Anytime MAPF/C}
We adopt the SOTA anytime MAPF method, namely LNS~\cite{li2021anytime}, which iteratively improves the solution quality until a solution is needed, to facilitate real-time replanning.
We use ECBS~\cite{barer2014suboptimal} as the initial planner as it provides a bounded-suboptimal solution and prioritized planning with SIPP~\cite{phillips2011sipp} to rapidly replan for a subset of robots. 
We extend both ECBS and SIPP  using MAPF/C to account for the robot embodiment. 

For priority planning with SIPP
, we propose the following SIPP with generalized conflicts algorithm. 

\subsubsection{SIPP with generalized conflicts}
SIPP compresses the time dimension into sparse safety intervals to significantly reduce the search space. The SIPP configuration augments the position with its safety intervals. In the resulting configuration space, $A^{*}$ finds the shortest path for a robot.  Planned robots are considered moving obstacles and 
modify the safety interval of the traversed states. We propose SIPP with generalized conflicts (SIPP/C) with the following modifications and a different \textit{getSuccessors(s)} algorithm, where line $11 - 14$ (the highlighted part) differs from the original algorithm. 
In SIPP/C, the collision intervals, the complements of safety intervals, are added to vertices and edges as follows:

\noindent\textbf{SIPP/C vertex conflict:} for a robot at the vertex $u_{k}$ in a planned path at time step $k$, we add the collision interval $[k,k]$ to vertices and edges that intersect with the robot-robot collision model at $pos(u_{k})$, i.e., $\mathcal{R}_{\mathcal{R}}(pos(v)) \cap \mathcal{R}_{\mathcal{R}}(pos(u_k))$ and $\mathcal{R}^{*}_{\mathcal{R}}(e) \cap \mathcal{R}_{\mathcal{R}}(pos(u_k))$, $\forall v,e$. Note here, we use the \textit{swept} model $\mathcal{R}^{*}_{\mathcal{R}}(e)$ when the robot traverses an edge.

\noindent\textbf{SIPP/C edge conflict:} for a robot traversing an edge $e_{k}\!\!:=\!\!(u_{k}, u_{k+1})$ at time step $k$, we add the collision interval $[k, k]$ to vertices and edges that intersect with $\mathcal{R}^{*}_{\mathcal{R}}(e_{k})$, i.e., $\mathcal{R}_{\mathcal{R}}(pos(v)) \cap \mathcal{R}^{*}_{\mathcal{R}}(e_{k})$ and $\mathcal{R}^{*}_{\mathcal{R}}(e) \cap \mathcal{R}^{*}_{\mathcal{R}}(e_{k})$, $\forall v,e$.

\setlength{\textfloatsep}{0pt}
\begin{algorithm}[t]
\scriptsize
\caption{getSuccessors(s) for SIPP/C}
\label{alg:SIPP_with_MAPF/C}
$successors = \emptyset$\\
\For{each action e in E(s)} {
    $cfg$ = configuration of $e$ applied to $s$\\
    e\_time = time to execute action $e$\\
    start\_t = time($s$) + e\_time\\
    end\_t = interval($s$).end + e\_time\\
    \For{each safe interval i in cfg} {
        \If{i.start $>$ end\_t $\mid\mid$ i.end $<$ start\_t} {
            continue
        }
        $t$ = earliest arrival time at $cfg$ during interval $i$ with no collisions\\
        \tikzmk{A}\If{interval($e$).end $\geq t \!- \!e\_time ~\&\!\&$ 
        interval($e$).start $\leq$ interval($s$).end $~\&\!\&$ 
        $i$.start $\leq$ interval($e$).start + e\_time $\leq$ $i$.end
        } {
            t = max(interval($e$).start + m\_time, t)\\
        } 
        \Else {
            continue
        }\tikzmk{B}
        \boxit{myyellow}
        $s^{'}$ = state of configuration $cfg$ with interval $i$ and time $t$\\
        insert $s^{'}$ into $successors$
    }
}
\Return{successors}
\end{algorithm}

In Algo.~\ref{alg:SIPP_with_MAPF/C}, $E(s)$ is the action space at state $s$. 
For a discrete path $\mathcal{P}^{i}$, we assign a time $t_{k} = k\Delta t$ to each discrete time step and obtain the path $f^{i}$. $\Delta t$ is a user-defined value to satisfy the robot's dynamic constraints. The low-level replanning happens repeatedly with a $\delta_{l}$ time interval.

\subsection{Cell-crossing Protocol}
A robot would idle at a local goal if the path in its next cell is not yet computed. 
We propose a cell-crossing protocol that results in non-stop execution, even when traversing between cells. 
We buffer the hyperplane $H_{ml}$ by a distance $d_{e}$ towards cell $P_{m}$. All robots, currently in $P_{m}$ and exiting to $P_{l}$, compute paths for $P_{l}$ within the buffer. 
Buffering is achieved by changing the hyperplane offset to $\mathcal{H}^{'}_{a} = \mathcal{H}_{a} - \mathcal{H}_{\mathbf{n}}\cdot d_{e}$.
By enforcing the buffer distance $d_{e} \geq \delta_{l} \cdot V_{max}$, where $V_{max}$ is the maximum robot speed, 
the robot is guaranteed to have a plan in its next cell computed at least once 
before leaving its current cell.
A robot entering the buffer zone will then have a plan to exit its current cell and transition through its next cell. The robot, in the buffer zone, fixes its plan of the current cell to lock the local goal and expected arrival time to its next cell. Thus, when computing the plan for the next cell, the robot's expected start time and position will be pre-determined and independent of cell planning order.
The robot computes a plan for its next cell, then concatenates the fixed plan of its current cell and the plan in its next cell to form a complete transition plan. 
Fig.~\ref{fig:between_partition_planning} depicts our cell-crossing protocol.


\begin{figure}[t]
    \centering
    \includegraphics[width=0.16\textwidth]{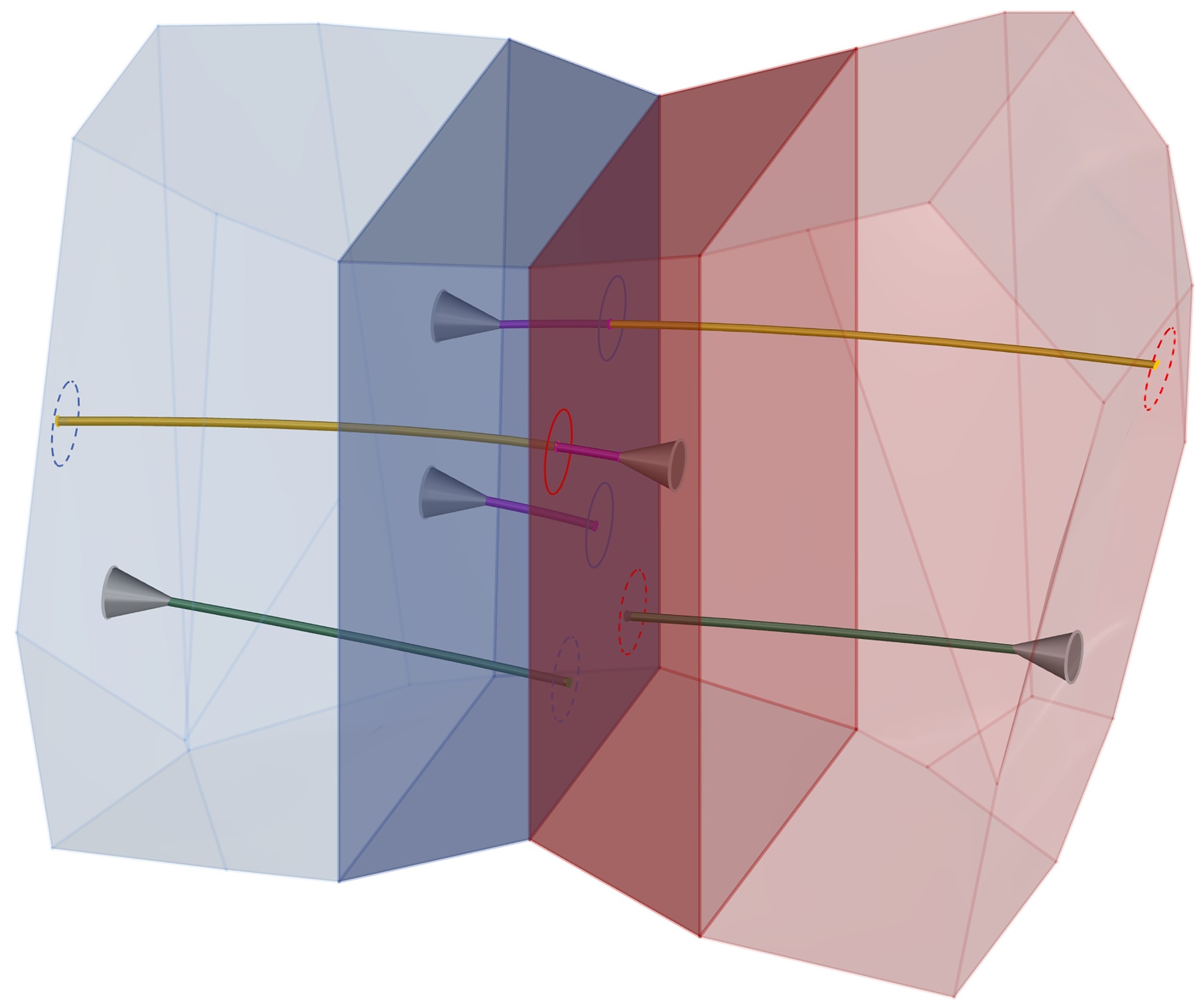}
    \caption{Robots (cones) fix their paths within the buffer zone and start replanning after crossing the hyperplane. 
    }
    \label{fig:between_partition_planning}
\end{figure}

\begin{table*}[t]
\vspace{+1em}
\centering
\scalebox{0.8}{
\begin{tabular}{c||c||c|c||c|c|c|c||c|c|c|c}
\multicolumn{1}{c||}{Instance} & \multicolumn{1}{c||}{ Q } & \multicolumn{2}{c||}{ Roadmap } & \multicolumn{4}{c||}{ High-level }  & \multicolumn{4}{c}{ Low-level } \\
&    & $\left|V\right|$ & $\left|E\right|$ & Method & $\bar{t}_{\mathrm{high}}(\unit{s})$ & $\bar{t}^{\mathrm{max}}_{\mathrm{high}}(\unit{s})$ & $\bar{N}_{\mathrm{max}}$ & Method & $\bar{T}$ & $\bar{t}_{\mathrm{low}}(\unit{s})$ & $\bar{t}^{\mathrm{max}}_{\mathrm{low}}(\unit{s})$ \\
\hline \hline
Demo32 & 8  & 1059 & 5890 & MCF & $0.06$ & 0.06 & 18.0 & ECBS(3.0)+PP & $26.88$ & $0.03\pm 0.01$ & $0.13\pm 0.06$\\
\hline
Circle74 & 1  & 1110 & 6728 & - & - & - & 29.0 & ECBS(2.0) & $\mathbf{29.00}$ & $12.31\pm 0.17$ & - \\
\hline 
Circle74 & 10 & 1515 & 8760 & Greedy & - & - & 42.4 & ECBS(2.0)+PP & $46.85$ & $0.04\pm 0.03$ & $0.40\pm 0.42$\\
\hline
Circle74 & 10   & 1515 & 8760 & MCF & $0.01$ & 0.04 & $\mathbf{23.9}$ & ECBS(2.0)+PP & $42.85$ & $\mathbf{0.02\pm 0.00}$ & $\mathbf{0.09\pm 0.02}$\\
\hline 
Circle142 & 1  & 2100 & 13220 & - & - & - & 32.0 & ECBS(4.5) & $\mathbf{36.00}$ & $16.32\pm 0.28$ & - \\
\hline
Circle142 & 25  & 2954 & 16383 & Greedy &  - & - & 73.6 & ECBS(4.5)+PP &  $121.60$ & $1.69\pm 0.82$ & $37.58\pm 23.07$ \\
\hline
Circle142 & 25  & 2954 & 16383 & MCF & $0.13$ & 0.35 & $\mathbf{27.1}$ & ECBS(4.5)+PP & $98.35$ & $\mathbf{0.03\pm 0.01}$ & $\mathbf{0.28\pm 0.14}$ \\
\end{tabular}
}
\vspace{-0.5em}
\caption{Influence of different parameters on different instances. The statistics are averaged across 20 trials.}
\label{table:quantitative}
\vspace{-2.5em}
\end{table*}


\section{Results and Discussion}
\label{sec:results}
We now demonstrate the system  in experiments on simulated and physical robots.
For large-scale simulated robot experiments, we create a confined 3D space 
with random obstacles uniformly generated on a disk of radius $10$\unit{m}. To validate the algorithm's scalability, we scale up the number of robots and the corresponding workspace size to maintain the robot density in different experiment instances. 
The  ``Circle74'' workspace is $20$\unit{m}$\times 20$\unit{m}$\times 8$\unit{m}. We generate 74 robots whose start states form a circle with a $10$\unit{m} radius at the height of $1$\unit{m} and are centered at the $x$-$y$ plane's origin, depicted in Fig.~\ref{fig:qualitative}a. 
In  ``Circle142'' we scale $x$ and $y$ dimensions by $\sqrt{N}$ to $27.7$\unit{m}$\times 27.7$\unit{m}$\times 8$\unit{m}. The robots start in concentric circles with $13.85$\unit{m} and $11.85$\unit{m} radii at $1$\unit{m} high, centered at the $x$-$y$ plane's origin, shown in Fig.~\ref{fig:qualitative}b. 
In  ``Demo32" we model the occupancy map of our cluttered lab environment. It is 
$12.55$\unit{m}$\times 7.63$\unit{m}$\times 2.8$\unit{m}. 
We run 32 Crazyflies with initial $x$-$y$ positions uniformly on an ellipse at $1$\unit{m} high, shown in Fig.~\ref{fig:qualitative}c.
The goal states are the antipodal points on the circle (or ellipse). We construct the roadmap using a 6-connected grid graph with an edge length of $1.6$\unit{m} for large-scale simulation and $0.7$\unit{m} for the lab environment. 

For simplicity, we set the same influx limit $\theta_{m} \!\!=\!\! 20$ for all cells in high-level planning and a suboptimal bound $w_{\mathrm{mcf}}\!\! =\!\! 2$ for both MCF/OD and One-shot MCF algorithms. We set the high-level planning time interval $\delta_{\mathrm{h}}\!\! =\!\! 5$\unit{s} for ``Circle74'',  $\delta_{\mathrm{h}}\!\! =\!\! 3$\unit{s} for ``Circle142'', and $\delta_{\mathrm{h}}\!\! =\!\! 10$\unit{s} for ``Demo32''. We set the low-level planning time interval $\delta_{\mathrm{l}}\!\! =\!\! 1$\unit{s}. For the LNS planner with random neighborhood selection, we use ECBS as the initial planner and prioritized planning with SIPP as the iterative planner, both planners we extend with MAPF/C. 
To account for downwash, we use an axis-aligned bounding box to represent the robot-robot collision model $\mathcal{R}_{\mathcal{R}}(\mathbf{0})$. 
In simulations, i.e., ``Circle74'' and ``Circle142'', we use the axis-aligned bounding box from $\left[-0.12\unit{m}, -0.12\unit{m}, -0.3\unit{m} \right]^{\top}$ to $\left[0.12\unit{m}, 0.12\unit{m}, 0.3\unit{m} \right]^{\top}$. 
Since ``Demo32'' is denser, we use the bounding box from $\left[-0.12\unit{m}, -0.12\unit{m}, -0.2\unit{m} \right]^{\top}$ to $\left[0.12\unit{m}, 0.12\unit{m}, 0.2\unit{m} \right]^{\top}$. We use the same shape representation for the robot-environment collision model $\mathcal{R}_{\mathcal{E}}(\mathbf{0})$. All experiments run on an Intel i7-11800H CPU computer. 

Fig.~\ref{fig:qualitative} depicts typical solutions of the proposed algorithm.
We summarize quantitative results in Table~\ref{table:quantitative}, where MCF refers to  MCF/OD and One-shot MCF running in parallel, as described in Sec.~\ref{sec:high_level_formulation}. 
Note that, as $\left| V\right|$ and $\left|E\right|$ suggest, the number of vertices and edges increases after partitioning as we add local goals and corresponding edges.
With a small computational overhead $\bar{t}_{\mathrm{high}}$, the proposed high-level planner effectively reduces the congestion among cells compared to both greedy and partitionless baseline approaches, by inspecting $\bar{N}_{\mathrm{max}}$. Here $\bar{t}_{\mathrm{high}}$ is the average high-level planning time and $\bar{N}_{\mathrm{max}}$ is the averaged maximum number of robots in a cell throughout the whole execution. 
By increasing the number of cells $Q$, the algorithm significantly reduces MAPF computation time. 
Specifically, for the average low-level planning time $\bar{t}_{\mathrm{low}}$, in instance ``Circle74", the proposed algorithm runs 616-times faster than the baseline method and 544-times faster in instance ``Circle142".   Both the average low-level replanning time $\bar{t}_{\mathrm{low}}$ and the averaged maximum low-level replanning time $\bar{t}^{\mathrm{max}}_{\mathrm{low}}$ are within the real-time regime for all instances.
Since we use an anytime algorithm, we only record its initial planning time in  Table~\ref{table:quantitative}, which can be directly compared to the baseline.
As we would expect, while running in real-time, the algorithm yields suboptimal solutions compared to the baseline MAPF, according to the average makespan $\bar{T}$. Because the partitioning invalidates the global optimality of MAPF algorithms, and the high-level planner detours robots among the partition, the planned paths become longer. 
\begin{figure}[t]
    \begin{subfigure}{0.155\textwidth}
       \centering {\footnotesize(a)}\includegraphics[height=.99\textwidth]{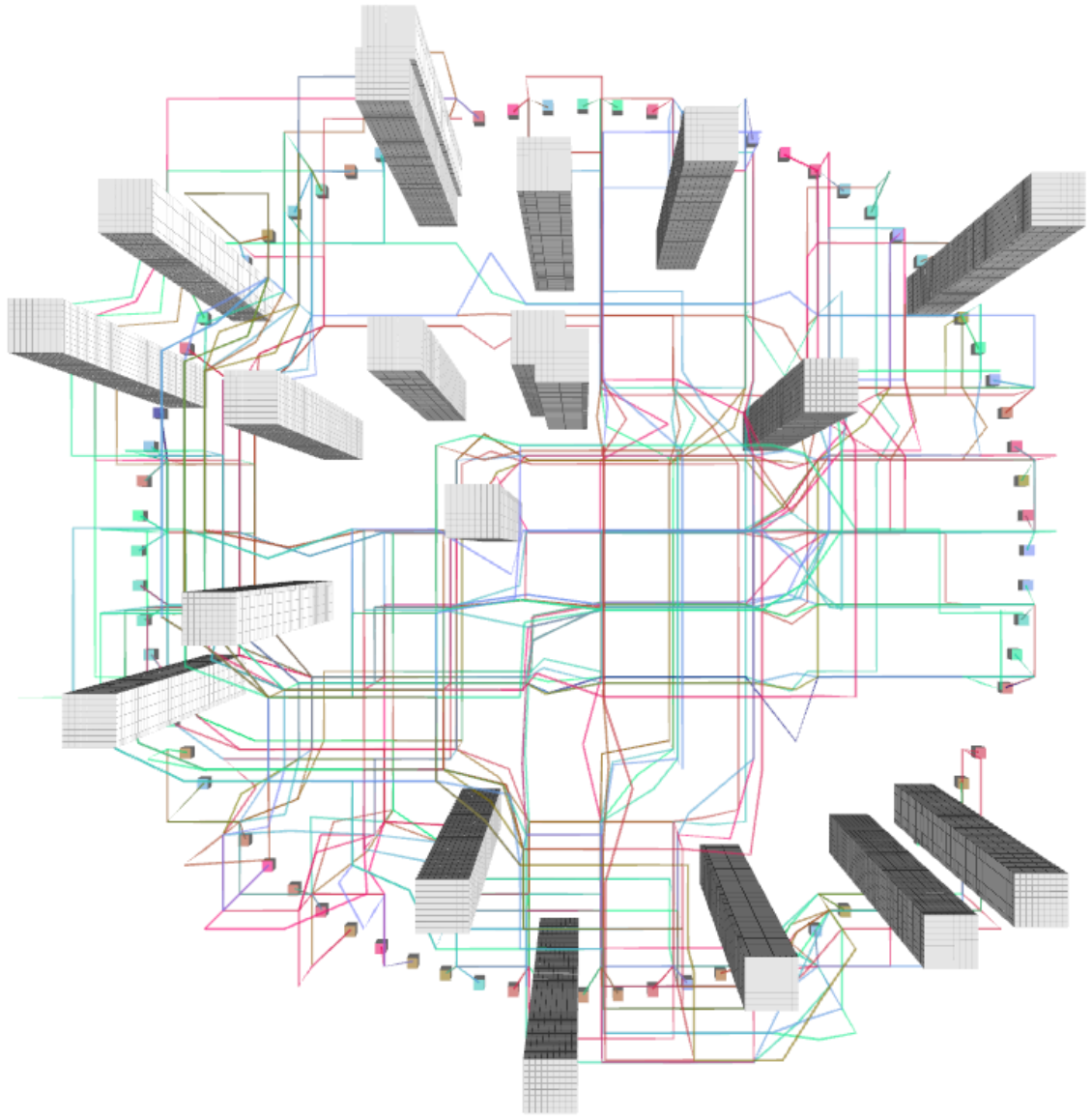}
    \end{subfigure}
    \begin{subfigure}{0.155\textwidth}
        \centering {\footnotesize(b)}\includegraphics[height=.99\textwidth]{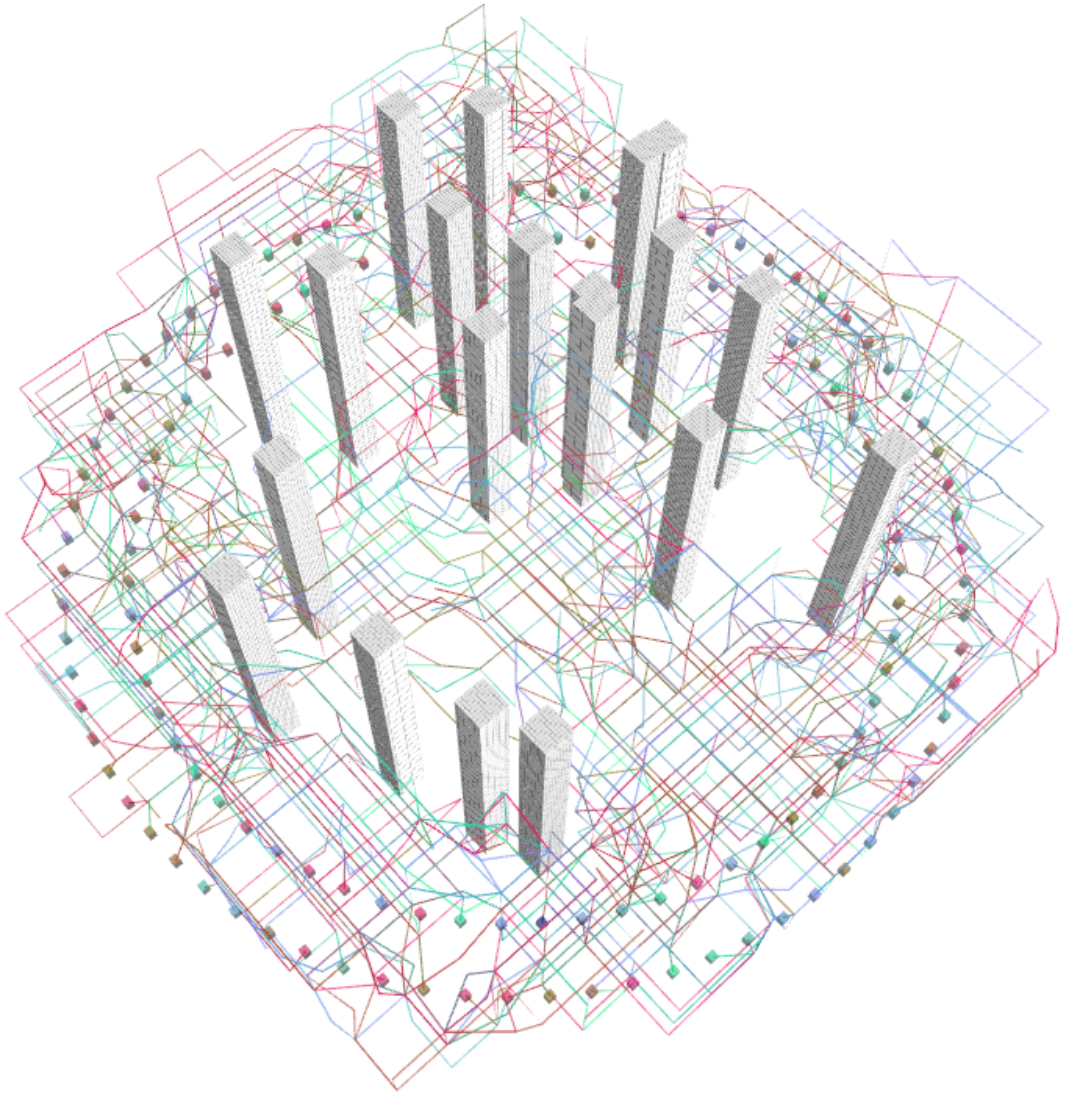}
    \end{subfigure}
    \begin{subfigure}{0.12\textwidth}
        \centering {\footnotesize(c)}\includegraphics[height=.99\textwidth]{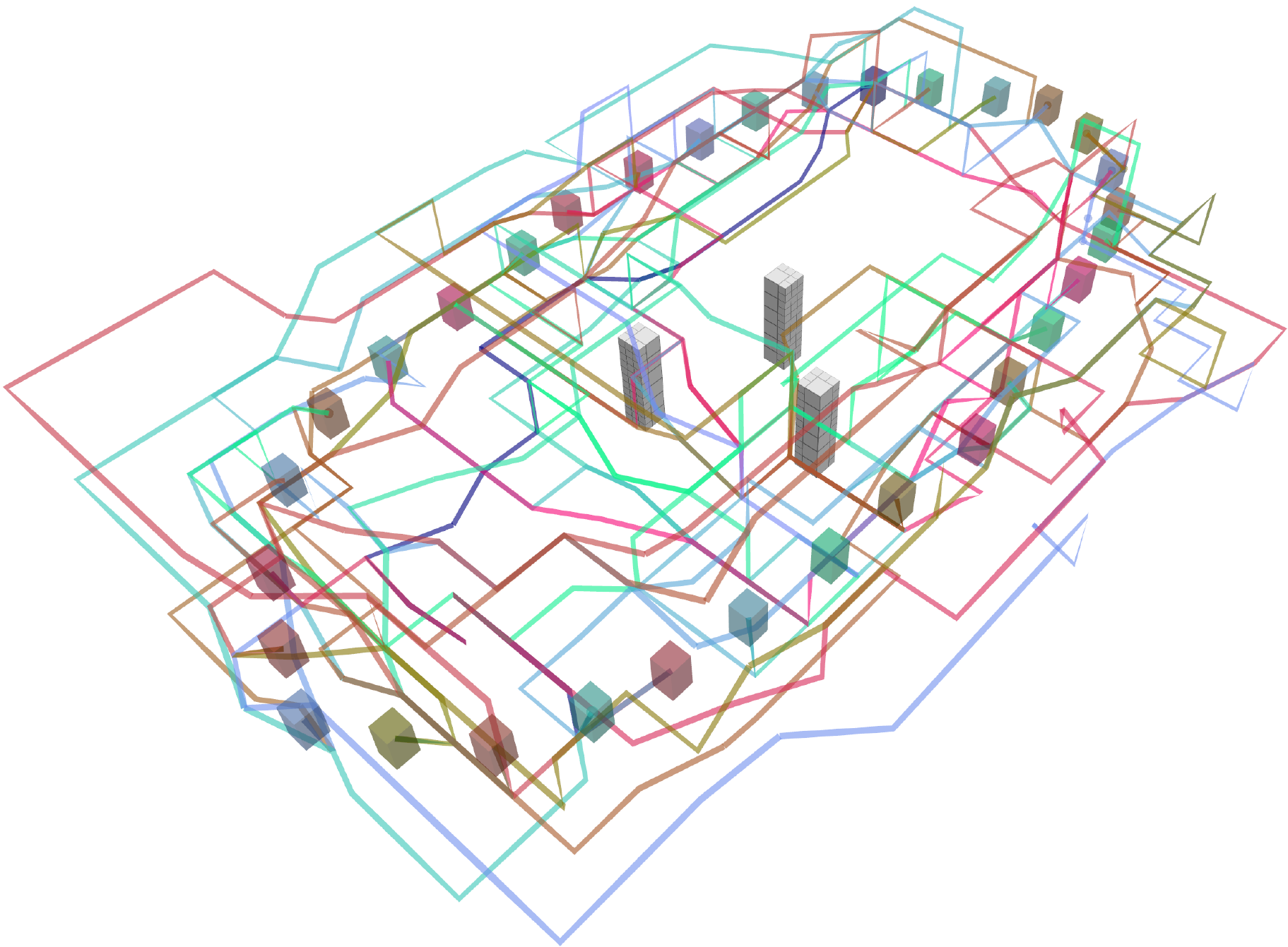}
    \end{subfigure}
\caption{Representative runs of our algorithm with (a) 74 robots, (b) 142 robots, and (c) 32 robots in cluttered environments.} 
\label{fig:qualitative}
\end{figure}
\subsection{Effectiveness of Partition in Low-level Planning}
Fig.~\ref{fig:qualitative_gp}a shows a quantitative evaluation of the low-level planning time and its solution makespan against the number of cells in ``Circle74". We report the median of low-level planning time for its robustness. The logarithm of low-level planning time decreases significantly as the number of cells increases. Thus, by dividing the workspace and parallelizing computation, the proposed algorithm significantly reduces the low-level planning time. The makespan increases at the beginning then plateaus. This phenomenon is expected. As the number of cells increases, at the beginning, the high-level planner becomes more effective in detouring the robots to reduce congestion. At a certain point, no further detour is required as congestion regulation is satisfied. Additionally, the partition breaks the global optimality of the MAPF planner, leading to a degeneracy in solution quality.
\begin{figure}[h]
    \centering
    \begin{minipage}{0.255\textwidth}
    \centering
    {\footnotesize(a)}
    \includegraphics[width=0.86\textwidth]{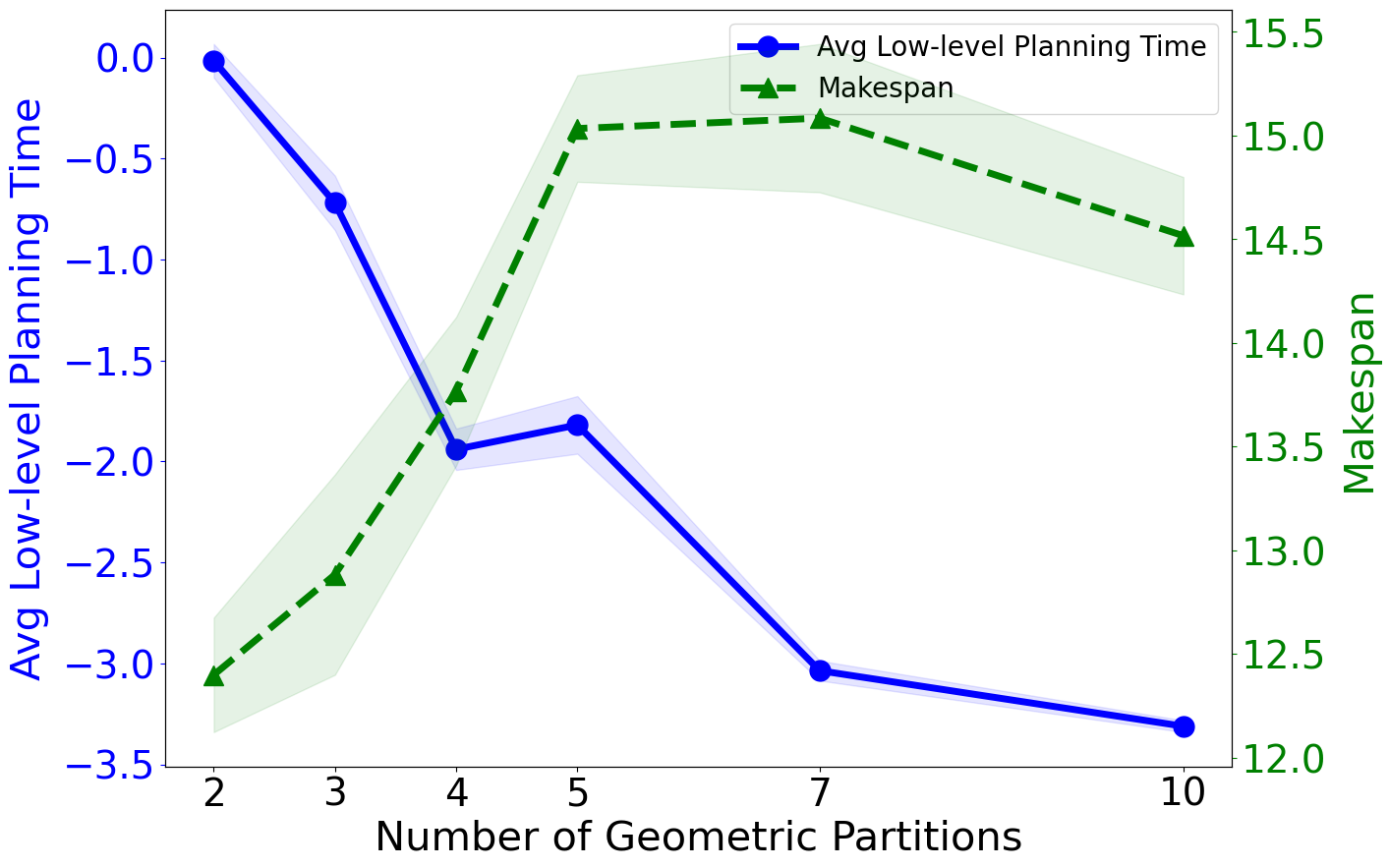}
    \end{minipage}
    \begin{minipage}{0.22\textwidth}
    \centering
    {\footnotesize(b)}
    \includegraphics[width=0.86\textwidth]{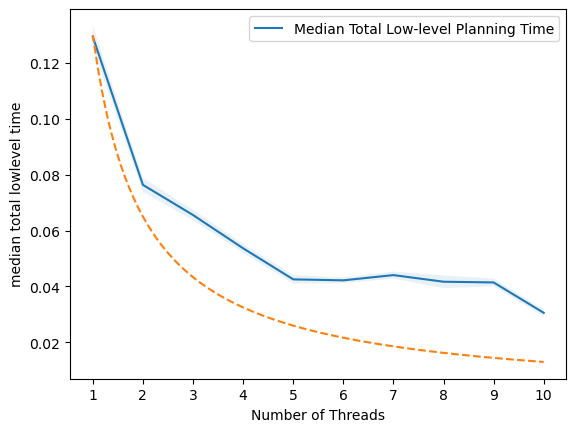}
    \end{minipage}
    \caption{95\% Confidence intervals of (a) log-median of low-level planning time and Makespan of our algorithm (b) median of total low-level planning time. The statistics are averaged across 20 trials.}
    \label{fig:qualitative_gp}
\end{figure}

Our algorithm runs MAPF in parallel for all cells; 
in Fig.~\ref{fig:qualitative_gp}b, we investigate the utilization of multi-threading. The orange dashed curve represents the theoretical lower bound of the total low-level planning time (assuming no overhead is introduced when allocating the computation to different threads). We observe that our method's computation time follows the same trend, while the gap between the theoretical lower bound and experimental computation time increases as we increase the number of threads, indicating an increase in parallelization overhead in more cells. 
\subsection{Effectiveness of High-level Planning}
To demonstrate the effectiveness of our MCF-based high-level planner, we compare its cell congestion to an egocentric greedy approach. The greedy planner outputs a single-robot-based shortest inter-cell routing without considering other robots' routing, and may lead to congestion in certain cells. Additionally, we compare the MCF-based approach with the baseline by imposing the partition onto the workspace. 

In Table~\ref{table:quantitative}, we report the average high-level computation time $\bar{t}_{\mathrm{high}}$ and the averaged maximum number of robots in a cell throughout the whole execution $\bar{N}_{\mathrm{max}}$. 
The complexity of a MAPF instance scales poorly with the number of robots. Thus,  $\bar{N}_{\mathrm{max}}$ is an insightful indicator of the computational hardness of a MAPF instance. The computation time for greedy high-level planning is instant, while the MCF-based methods have additional overhead that increases with the number of cells, since they are centralized. For all instances in this paper, MCF-based methods provide real-time solutions. 

The MCF-based methods reduce the congestion compared to greedy and baseline methods, according to $N_{\mathrm{max}}$. Although the advantage to the baseline is minor in the current experiment setup, the MCF-based approach can further reduce the congestion by relaxing the suboptimality bound, i.e., increase $w_{mcf}$. For a fair comparison of low-level planning time, both greedy and MCF-based utilize the same number of threads in computation. Compared to the greedy approach, the proposed MCF-based inter-cell routing brings cell planning computation time to real time while maintaining its solution quality, i.e., makespan. 
\subsection{Scalability of the Proposed Algorithm}
To validate the scalability of our algorithm, we run simulations on increasing numbers of robots (c.f., Table.~\ref{table:quantitative}). Note that our algorithm achieves real-time performance in all instances as we scale up, according to the averaged maximum high-level and low-level replanning time $\bar{t}^{\mathrm{max}}_{\mathrm{high}}$ and $\bar{t}^{\mathrm{max}}_{\mathrm{low}}$. Due to our hierarchical approach, it can further scale to larger teams with more cells and CPU threads. The computation bottleneck is the centralized high-level planning in larger scale problems, which we aim to address in future work.
\subsection{Physical Robots}
Fig.~\ref{fig:demo} shows a representative experiment with 32 physical Crazyflies in a cluttered environment (video link: \url{https://youtu.be/ftdWVpLkErs}).  We use a \textsc{Vicon} motion capture system to localize and CrazySwarm~\cite{preiss2017crazyswarm} to control the robots. In the experiment, robots are uniformly located on an ellipse, with their antipodal goals. Three column obstacles are placed within the ellipse. The experiment demonstrates that the proposed algorithm distributes  robots effectively through the workspace and achieves real-time replanning.

\section{Conclusion and Future Work}
We have introduced a hierarchical path planning algorithm for large-scale coordination tasks. Despite yielding a suboptimal solution, our algorithm significantly reduces the computation time and suits on-demand applications such as drone delivery. The framework achieves real-time operation by dividing the workspace into disjoint cells; within each, an anytime MAPF planner computes collision-free paths in parallel. Our high-level planner regulates congestion while guaranteeing routing quality. Additionally, 
our algorithm considers the robot embodiment, and we run experiments with the downwash-aware collision model of a quadrotor. 
We also devise a cell-crossing protocol, which guarantees non-stop execution even when transiting between cells. 

The proposed algorithm is designed for lifelong replanning. In our experiments, 
goals are chosen from a pre-determined set, however,  
it can be extended to a lifelong replanning as we request new goals online, and the algorithm runs conflict annotation to update the conflicts.

While the MCF-based high-level planner operates in real-time in our experiments with up to 142 robots in a 25-cell partition, the limits of its real-time operation are not well defined and depend on the number and density of robots and cells in the space.
Future work will explore distributed MCF~\cite{awerbuch2007greedy, awerbuch2012distributed} to increase scalability of the system with real-time, distributed operation regardless of the number of robots and cells. 
Furthermore, we aim to solve real time large scale motion planning, which respects the robot's kinodynamic constraints and plans in continuous space and time.



\bibliographystyle{IEEEtran}
\bibliography{IEEEabrv, refs}

\end{document}